
\documentclass{article}

\usepackage{microtype}
\usepackage{graphicx}
\usepackage{subfigure}
\usepackage{booktabs} 
\usepackage{verbatim}

\usepackage{hyperref}
\usepackage{algpseudocode}



\usepackage[accepted]{icml2025}

\usepackage{amsmath}
\usepackage{amssymb}
\usepackage{mathtools}
\usepackage{amsthm}

\usepackage[capitalize,noabbrev]{cleveref}

\theoremstyle{plain}
\newtheorem{theorem}{Theorem}[section]
\newtheorem{proposition}[theorem]{Proposition}
\newtheorem{lemma}[theorem]{Lemma}

\theoremstyle{definition}

\theoremstyle{remark}
\newtheorem{remark}[theorem]{Remark}

\DeclareMathOperator{\polylog}{polylog}

\usepackage[textsize=tiny]{todonotes}

\icmltitlerunning{What Makes a Good Feedforward Computational Graph?}

\begin{document}

\twocolumn[
\icmltitle{What Makes a Good Feedforward Computational Graph?}



\icmlsetsymbol{equal}{*}

\begin{icmlauthorlist}
\icmlauthor{Alex Vitvitskyi}{gdm}
\icmlauthor{Jo\~{a}o G.M. Ara\'{u}jo}{gdm}
\icmlauthor{Marc Lackenby}{ox}
\icmlauthor{Petar Veli\v{c}kovi\'{c}}{gdm}
\end{icmlauthorlist}

\icmlaffiliation{gdm}{Google DeepMind}
\icmlaffiliation{ox}{University of Oxford}

\icmlcorrespondingauthor{Alex Vitvitskyi}{avlife@google.com}

\icmlkeywords{Machine Learning, ICML}

\vskip 0.3in
]



\printAffiliationsAndNotice{}  

\begin{abstract}
As implied by the plethora of literature on graph rewiring, the choice of \emph{computational graph} employed by a neural network can make a significant impact on its downstream performance. Certain effects related to the computational graph, such as under-reaching and over-squashing, may even render the model incapable of learning certain functions. Most of these effects have only been thoroughly studied in the domain of \emph{undirected} graphs; however, recent years have seen a significant rise in interest in \emph{feedforward} computational graphs: directed graphs without any back edges. In this paper, we study the desirable properties of a feedforward computational graph, discovering two important complementary measures: \emph{fidelity} and \emph{mixing time}, and evaluating a few popular choices of graphs through the lens of these measures. Our study is backed by both theoretical analyses of the metrics' asymptotic behaviour for various graphs, as well as correlating these metrics to the performance of trained neural network models using the corresponding graphs.
\end{abstract}

\section{Introduction}

\begin{figure}
    \centering
    \includegraphics[width=0.95\linewidth]{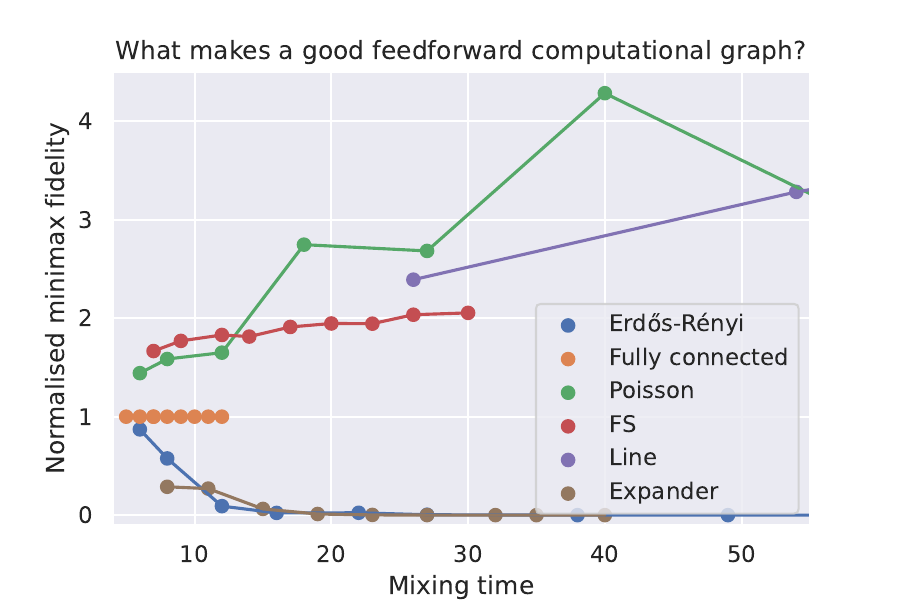}
    \caption{In this paper, we propose two measures which can be used to assess suitability of \emph{feedforward} computational graphs for neural networks -- 
    \textbf{mixing time} (lower is better; Sec. \ref{sec:mix}) and (normalised) \textbf{minimax fidelity} (higher is better; Sec. \ref{sec:fide}). Here we plot these metrics as attained by various graph generators---at each point in a sequence, the number of nodes doubles, starting from $16$. Using FunSearch \citep{FunSearch2023}, we discover the \textbf{FS graphs}, which have favourable $O(\polylog n)$ mixing time, while retaining higher fidelity compared to a fully connected graph (Sec. \ref{sec:fs}). Several of these graphs may be viewed in Appendix \ref{app:gallery}.}
    \label{fig:ffg}
\end{figure}

Modern deep learning workloads frequently necessitate processing of \emph{sequential} inputs, such as words in a sentence \citep{NIPS2014_a14ac55a}, samples of an audio recording \citep{van2016wavenet}, partially ordered nodes in a graph \citep{thost2021directed}, execution steps of an algorithm \citep{velivckovic2022clrs}, or snapshots of edits made to temporal graphs \citep{rossi2020temporal}. In addition, many important self-supervised learning tasks require efficiently predicting future evolution of such inputs, with examples ranging from temporal link prediction \citep{huang2024temporal} to next token prediction \citep{radford_improving_2018}. 

In regimes like this, it is important to be able to train such models \emph{scalably} and \emph{without leaking} ground-truth data about the future input parts that need to be predicted. As such, many modern architectures resort to \emph{feedforward} computational graphs, wherein information may only flow from older samples towards newer ones---never in reverse! Applying such a graph -- also known as a ``causal mask'' \citep{srambical2024going} -- allows for scalable training over entire chunks of the input at the same time.

This naturally invites the question: \emph{what makes a good feedforward computational graph?} Alternately worded, which considerations need to be taken into account when deciding which feedforward graph to use (see Figure \ref{fig:ffg})?

Here we propose two suitable and complementary ways to measure suitability of feedforward graphs: \textbf{mixing time}---the speed at which input data converges towards a stationary distribution---and \textbf{minimax fidelity}---the sharpness of the data is as it propagates through the graph. We supplement our measures with thorough theoretical derivations across several graph generators, and correlate them to empirical performance, paving the way to future studies in the area.

Note that by ``feedforward'' we imply a statement about our \emph{input data}---not the \emph{neural network} processing it! That is, the assumption that our input nodes have a sequential ordering to them which must be respected when processing them---see Section \ref{sec:thy}. We therefore do not design our framework with more general feedforward neural networks (such as MLPs) in mind---however, we believe some of the ideas explored here could be used to study propagation in such unrestricted networks as well.

\section{Motivation}

We were inspired to seek an answer to this question, given that there is already a rich, extensive body of work on studying undirected computational graphs (commonly referred to as \emph{graph rewiring}). Therein, important issues such as oversmoothing \citep{li2018deeper,oono2019graph,keriven2022not}, oversquashing \citep{alon2021on,giovanni2024how} and under-reaching \citep{Barcelo2020The} have all been identified, and related to the input graph topology. In response, a substantial amount of methods have been proposed to modify the provided input graph for improved diffusion \citep{gasteiger2019diffusion}, curvature \citep{topping2022understanding,fesser2024mitigating}, effective resistance \citep{arnaiz2022diffwire}, or reducing smoothing \citep{azabou2023half} and commute time \citep{sterner2024commute}. While efforts were made to generalise these techniques to directed graphs \citep{maskey2023fractional}, as well as for sparsifying attention-based architectures \citep{zaheer2020big,liao2024graph}, in all previous cases they focussed on graphs where backwards edges are explicitly allowed.

Eventually, the focus of study expanded beyond ``how to modify an input graph to be better?'' towards ``what makes a good input graph?'', propelling the discovery of \emph{task-agnostic} graphs that are guaranteed to have good topological properties. This question is well-studied in mathematics, where it had led to the advent of \emph{expander graphs} \citep{kowalski2019introduction}; graphs with highly sparse topologies but incredibly good information propagation. Once expanders have been discovered in the context of graph machine learning, they have seen equal application among graph neural networks \citep{deac2022expander,christie2023higher,wilson2024cayley} and graph Transformers \citep{shirzad2023exphormer,shirzad2024even}.

Unfortunately, to the best of our knowledge, the state-of-the-art is not as rich in the domain of feedforward graphs. This includes the domain of mathematics, wherein many important concepts have not been generalised to the directed case\footnote{We are aware of one prior work generalising undirected expanders to the concept of a feedforward \emph{extender} \citep{csoka2022directed}---we leave making any connections of extender graphs to the propagation properties studied here for future work.} as they may rely on spectral properties of the graph structure \citep{chung1997spectral}, and many spectral properties are ill-defined on a feedforward graph. In terms of practical usage, most of the heavy lifting is done either by \emph{fully connected} feedforward graphs or \emph{locally-connected} sliding-window graphs. And while recent work has identified limitations of high-indegree feedforward graphs through over-squashing \citep{barbero2024transformersneedglassesinformation} and dispersion \citep{velickovic2024softmaxforsharpoutofdistribution}, most such works do not actively offer a different computational graph structure, nor do they offer any principles that can be used to derive one. We seek to fill this gap.

{\bf What's in a good metric?} Both of the above papers make it clear---at least if size generalisation is a desirable property---that it is beneficial to \emph{limit the in-degree of each node in the feedforward graph.}

A good metric should be able to help us, in the very least, \textbf{compare against different graph distributions with \emph{same asymptotic in-degree budget}.}

It might also be helpful if the metric would be able to hint to practitioners at what point the in-degree budgets become \textbf{problematic}---specifically, this means they shouldn't be optimised for the fully connected graph. This is not strictly necessary, as the papers above already provide ample proof.

\section{Theoretical Foundations}\label{sec:thy}

In this work, we study \textbf{feedforward graphs}, $\mathcal{G}=(\mathcal{V},\mathcal{E})$, i.e., graphs where nodes are given by an ordered set of $n$ integers ($\mathcal{V}=\mathbb{Z}_n$) and edges are constrained to only go \emph{forwards}; that is, $(a,b)\in\mathcal{E}\implies a\leq b$. We will denote by $\tau\in\mathcal{V}$ the \emph{sink vertex} of that graph ($\tau = n - 1$), a node which is used to drive decision-making, and which is not allowed to have any outward edges (except to itself).

Much like a degree in an undirected graph, a node $i$ in a feedforward graph has an \emph{indegree} $\delta_{i\leftarrow}$ and an \emph{outdegree} $\delta_{i\rightarrow}$, counting the number of incoming and outgoing edges to/from node $i$.

As an illustrative example, two types of feedforward graph are commonly used for machine learning tasks today:
\begin{itemize}
    \item The \textbf{fully connected} feedforward graph, which draws an edge between every allowed pair of nodes; $\mathcal{E} = \{(a, b)\ |\ a \leq b\}$. Indegrees and outdegrees in this graph are $\delta_{i\leftarrow} = i + 1, \delta_{i\rightarrow} = (n - i)$.
    \item The \textbf{locally connected} feedforward graph, which draws an edge between all allowed pairs of nodes up to a distance $\kappa\geq 0$ apart: $\mathcal{E} = \{(a, b)\ |\ b - \kappa \leq a\leq b\}$. A special case of $\kappa = 1$ is known as a \textbf{line graph}. Indegrees and outdegrees are $\delta_{i\leftarrow}=\delta_{i\rightarrow}=\kappa + 1$ provided $\kappa \leq i \leq n - \kappa - 1$.
\end{itemize}

We now outline three desirable properties which we will typically assume during our exploration:

\paragraph{Self-Edges.} We will always assume all self-edges are present in the graph -- that is, $(i, i)\in\mathcal{E}$ for all $i\in\mathcal{V}$, unless otherwise stated. This is generally an important design decision for improving data retention.

\paragraph{Unique Sinks.} As we will be tracking how easily information from each node can reach the designated sink node, $\tau$, it is highly desirable that our graph generator does not create additional sink nodes of any kind (beyond the final node, of course). Any additional sink nodes imply an impossibility to reach the final sink from such nodes, and they are hence effectively excluded from decision making.

\paragraph{Self-Similarity.} While most of our theory will concern the flow of information specifically into $\tau$, it is important to note that in real workloads, \emph{any} node might be used as a sink node (over an appropriately sampled sub-input). As such, we focus our attention on \emph{self-similar} feedforward graphs, wherein we can expect similar flow properties to all nodes (including ones in the middle).

\section{Mixing Time: Tracking Path Complexity}\label{sec:mix}

We are now ready to work our way towards defining the first of our two measures, the \emph{averaged mixing time} of the feedforward graph. This measure tracks how quickly information travels towards the sink, by carefully analysing the expected path length from each node to the sink. Clearly, the aim is to keep this value within reasonable upper bounds.

\subsection{Basic Notions}

For a feedforward graph of $n$ nodes, $\mathcal{G}=(\mathbb{Z}_n,\mathcal{E})$, let $\mathbf{A}\in\mathbb{R}^{n\times n}$ be its adjacency matrix, defined in the usual way:
\begin{equation}
    a_{ij} = \begin{cases}
        1 & (j, i) \in \mathcal{E}\\
        0 & (j, i) \notin \mathcal{E}
    \end{cases}
\end{equation}
keeping in mind that $\mathbf{A}$ must be lower-triangular due to the feedforward constraint.

Its \emph{walk matrix} $\mathbf{W}$ is given by 
$$w_{ij} = 
\begin{cases}
1/\delta_{j\rightarrow} & (j,i)\in\mathcal{E} \\
0 & (j,i)\notin\mathcal{E}.
\end{cases}
$$
So $\mathbf{W}$ is the transition matrix for the \emph{(lazy) random walk} where there is an equal probability of leaving vertex $j$ along any of its outgoing edges. This will be well defined as long as $\delta_{j\rightarrow} > 0$ for all $j\in\mathbb{Z}_n$; we are guaranteeing this condition by requiring all self-edges within the graph.


Note that for graphs where, for all $j\in\mathbb{Z}_n$, $\delta_{j\rightarrow} = \kappa$ for some constant, $\kappa$, the walk matrix is related to the usual adjacency matrix $\mathbf{A}$. Specifically
$$\mathbf{W} = \frac{1}{\kappa}\mathbf{A}.$$
Hence, in this case, the spectrum of the walk matrix is obtained from the spectrum of $\mathbf{A}$ by scalar multiplication.

\subsection{Stationary Distributions}

A probability distribution $\boldsymbol{\pi}\in\mathbb{R}^n$ on the vertices is \emph{stationary} if $\mathbf{W}\boldsymbol{\pi} = \boldsymbol{\pi}$. In other words, $\boldsymbol{\pi}$ is unchanged by one step of the random walk.

It is a well-known result that a strongly connected graph has a unique stationary distribution. Here, a graph is \emph{strongly connected} if, for any ordered pair of vertices, there is an oriented path joining them. However, the graphs we will consider will be feedforward and hence not strongly connected. In that case, we have the following useful result.

\begin{lemma}
Let $\mathcal{G}$ be a feedforward graph with a unique sink vertex $\tau$. Then there is a unique stationary distribution for $\mathbf{W}$, namely $\mathbf{1}_\tau$, the probability distribution taking the value $1$ at $\tau$ and $0$ elsewhere.
\end{lemma}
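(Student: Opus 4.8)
The plan is to establish two things: that $\mathbf{1}_\tau$ is indeed stationary, and that it is the only stationary distribution. The first part is immediate from the structure of the graph. Since $\tau$ is a sink with only a self-edge outgoing, we have $\delta_{\tau\rightarrow}=1$, so the column of $\mathbf{W}$ corresponding to $\tau$ has a single nonzero entry $w_{\tau\tau}=1$. Thus $\mathbf{W}\mathbf{1}_\tau$ simply reads off this column, yielding $\mathbf{1}_\tau$ again, which verifies $\mathbf{W}\mathbf{1}_\tau=\mathbf{1}_\tau$ directly.

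For uniqueness, the plan is to exploit the feedforward (lower-triangular) structure together with the unique-sink hypothesis. First I would argue that any stationary distribution $\boldsymbol{\pi}$ must be supported only on $\tau$. The key observation is that mass can only flow forward: if $\boldsymbol{\pi}$ places positive probability on some non-sink vertex $v<\tau$, then because $v$ is not a sink it has an outgoing edge to some strictly later vertex (strictly later, since self-edges are accounted for but $v$ must also reach forward to avoid being a sink), so a strict fraction of the mass at $v$ leaves $v$ for a higher-indexed vertex after one step. I would make this precise by considering the largest index $m<\tau$ in the support of $\boldsymbol{\pi}$ and tracking the total mass on vertices with index $\geq m$; the feedforward constraint guarantees this quantity is nondecreasing under $\mathbf{W}$, and strictly increasing unless no mass sits at $m$, contradicting stationarity.

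An alternative, cleaner route for uniqueness is a direct linear-algebra argument. I would consider the equation $(\mathbf{W}-\mathbf{I})\boldsymbol{\pi}=\mathbf{0}$ and examine the kernel of $\mathbf{W}-\mathbf{I}$. Because $\mathbf{W}$ is lower-triangular, so is $\mathbf{W}-\mathbf{I}$, and its diagonal entries are $w_{ii}-1 = 1/\delta_{i\rightarrow}-1$. For every non-sink vertex $i$ we have $\delta_{i\rightarrow}\geq 2$ (it has a self-edge plus at least one forward edge, by the unique-sink assumption), so $1/\delta_{i\rightarrow}-1\neq 0$; only at $i=\tau$, where $\delta_{\tau\rightarrow}=1$, does the diagonal entry vanish. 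A lower-triangular matrix whose only zero diagonal entry is the last one has rank $n-1$, so its kernel is one-dimensional. Since $\mathbf{1}_\tau$ lies in that kernel and is a valid probability distribution, it is the unique stationary distribution.

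The main obstacle is making rigorous the claim that every non-sink vertex has outdegree at least two, i.e. that the unique-sink hypothesis genuinely forces a forward edge out of each non-sink vertex rather than merely a self-edge. I would need to carefully unpack the definition of ``sink'': a vertex with only a self-loop as its outgoing edge is effectively a sink for the purposes of reaching $\tau$, so the unique-sink condition must be read as ruling exactly this out for all $v\neq\tau$. Once that interpretation is pinned down, both the support argument and the triangular rank argument go through cleanly, and I would favour presenting the rank argument as the cleanest and most self-contained.
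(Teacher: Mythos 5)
Your proposal is correct, and the route you favour is genuinely different from the paper's. The paper's uniqueness argument is a one-line version of your first sketch, but it picks the \emph{smallest} vertex $i < n-1$ with $\pi_i \neq 0$: since the graph is feedforward, no mass can flow into $i$ from anywhere, so after one step the mass at $i$ is exactly $\pi_i/\delta_{i\rightarrow} < \pi_i$ (using $\delta_{i\rightarrow} > 1$), contradicting stationarity. This choice of extremal vertex is cleaner than yours: as you state it, tracking the mass on $\{v \geq m\}$ for the \emph{largest} supported $m < \tau$ does not work, because mass leaving $m$ along a forward edge lands in a vertex $> m$ and so stays inside that set; the quantity is nondecreasing but need not strictly increase just because mass sits at $m$ (you would have to track $\{v > m\}$ instead). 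Your second route -- observing that $\mathbf{W}-\mathbf{I}$ is lower-triangular with nonzero diagonal entries $1/\delta_{i\rightarrow}-1$ at every $i\neq\tau$ and a single zero at $\tau$, then reading off a one-dimensional kernel by forward substitution -- is sound and is not the paper's argument; it buys a slightly more structural statement (the full eigenspace of $\mathbf{W}$ for eigenvalue $1$ is spanned by $\mathbf{1}_\tau$, not merely that no other \emph{probability distribution} is fixed) at the cost of being less probabilistically transparent. Both approaches hinge on the same key fact, which you correctly identify and resolve: the unique-sink hypothesis forces $\delta_{i\rightarrow}\geq 2$ for every $i\neq\tau$, since a non-terminal vertex whose only outgoing edge were its self-loop would itself be a sink.
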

\begin{proof} Certainly $\mathbf{1}_\tau$ is stationary. To see that it is unique, consider a distribution $\boldsymbol{\pi}$ other than $\mathbf{1}_\tau$. Let $i < n - 1$ be its smallest vertex with $\pi_i \not=0$. When applying the random walk matrix $\mathbf{W}$, we can see that the probability of being at $i$ after one step is $\pi_i/\delta_{i\rightarrow}$. Since $\delta_{i\rightarrow} > 1$, we deduce that $\boldsymbol{\pi}$ is not stationary.
\end{proof}



\subsection{Mixing Times}

There are several definitions of mixing time in the literature. Here is one. The \emph{mixing time} is the smallest value of $t$ such that for any starting distribution $\mathbf{x}$,
$$\|\mathbf{W}^t \mathbf{x} - \boldsymbol{\pi}\|_1 < 1/4.$$
Here, the $L^1$ norm is used on probability distributions, i.e., for two probability distributions $\mathbf{x}$ and $\mathbf{y}$,
$$\|\mathbf{x}-\mathbf{y}\|_1 = \sum_i |x_i - y_i|.$$
There is nothing special in the use of $1/4$ here. Any fixed $0 < \epsilon < 1/2$ would work. 
In our set-up, it is reasonable not consider a minimum over starting distributions $\mathbf{x}$ but an `averaged' version:
$$\frac{1}{n} \sum_i \|\mathbf{W}^t\mathbf{e}_i - \boldsymbol{\pi}\|_1,$$
where $\mathbf{e}_i$ is the probability distribution concentrated on vertex $i$. If we let $\boldsymbol{\Pi}$ be the square matrix that has $\mathbf{\pi}$ in each column, then this is 
$$\frac{1}{n} \|\mathbf{W}^t - \boldsymbol{\Pi} \|_1,$$
where now we are taking the $L^1$ norm on matrices: 
$$\|\mathbf{B}\|_1 = \sum_{ij} |b_{ij}|.$$ 
We call this quantity the \emph{averaged mixing time} \citep{diaz2024speeding}. We will now demonstrate its behaviour is as expected on two commonly used graph distributions.

\subsubsection{Line Graphs}
Let $\mathcal{G}$ be the line graph from vertex $0$ to vertex $n - 1$, as previously defined. Suppose that we start a random walk at vertex $n-i-1$, i.e., the initial probability distribution is $\mathbf{e}_{n-i-1}$. We want to consider the probabilty of not reaching the terminal vertex after $t$ steps. We can think of flipping a coin $t$ times and we step right only if we get a head. We reach the terminal vertex if and only if we get at least $i$ heads. So the probability of not reaching it in $t$ steps is
$$\sum_{j<i} \binom{t}{j}2^{-t}.$$
The average of this value, over all $i$ between $0$ and $n-1$, is
$$\frac{1}{n} \sum_{i=0}^{n-1} \sum_{j<i} \binom{t}{j}2^{-t}
= \frac{1}{n} \sum_{i=0}^{n-1} (n-i-1) \binom{t}{i}2^{-t}.$$
When $n \geq t$,
$$\sum_{i=0}^{n-1} (n-1) \binom{t}{i}2^{-t} = (n-1)$$
$$\sum_{i=0}^{n-1} i \binom{t}{i}2^{-t} = \sum_{i=0}^t i \binom{t}{i}2^{-t} = t/2$$
So, the average value of the probability of not reaching the terminal vertex is
$$\frac{1}{n} \left( n - 1 - \frac{t}{2} \right ) = 1 - \frac{t + 2}{2n}.$$
This is less than $1/4$ when $t > 3n/2$. Hence, the averaged mixing time is $3n/2$, which is $O(n)$.

\subsubsection{Fully Connected Graphs}

Now consider the case when $\mathcal{G}$ is the fully connected feedforward graph. For convenience, we will label the graph with vertices $\{ 0, \dots, n \}$ where every pair of vertices $i$ and $j$ with $i < j$ are joined by an oriented edge from $j$ to $i$. (Hence, in this scenario, $0$ is the terminal vertex.) Then the average mixing time is of the order $O(\log n)$.

To prove this, say that we start at position $i_0$ and then move to position $i_1$, ending at $i_t$ after $t$ steps. Then the expected value of the vertex after $t$ steps is
$$\frac{1}{n+1} \sum_{i_0 = 0}^n \frac{1}{i_0+1} \sum_{i_1 = 0}^{i_1} \dots \sum_{i_{t-1} =0}^{i_{t-2}} \frac{1}{i_{t-1}+1} \sum_{i_t = 0}^{i_{t-1}} i_t$$
$$= \frac{1}{n+1} \sum_{i_0 = 0}^n \frac{1}{i_0+1} \sum_{i_1 = 0}^{i_1} \dots \sum_{i_{t-1} =0}^{i_{t-2}} \frac{1}{i_{t-1}+1} i_{t-1} (i_{t-1}+1)/2$$
and inductively, we can show that this is 
$$  \frac{n}{2^t}.$$
When $t > \log_2 4n^2$, this expectation is at most $1/(4n)$. This implies that the probability of being on the terminal vertex $0$ is at least $3/4$. Hence, the mixing time is  at most $\log_2 4n^2 = 2 + 2\log n$.



\subsection{Discussion About Mixing Time as a \emph{Good} Measure}
When the graph has fixed out-degree $\kappa$, then as observed before, 
$$\mathbf{W} = \frac{1}{\kappa}\mathbf{A}.$$ 
Hence $\kappa \mathbf{W}$ is the adjacency matrix $\mathbf{A}$ for the graph. Mixing time measures how close $\mathbf{W}^t \mathbf{e}_i$ is to the stationary distribution. In other words, it measures the size of $\mathbf{W}^t_{n-1,i}$. This is $1/\kappa^t$ times the $a_{n-1,i}$ entry of $\mathbf{A}^t$. Now this entry of $\mathbf{A}^t$ counts the \emph{number of oriented paths} of length $t$ in $\mathcal{G}$ from vertex $i$ to the terminal vertex. So, we have the conjectured equivalence: mixing time is small (e.g. $O(\log n)$) if and only if there are `lots' of paths from `most' vertices to $\tau$. In fact, we can formally prove the following proposition:
\begin{proposition}
Suppose that the outdegree of every vertex other than $\tau$ is at least $2$. Let $t$ be the average mixing time. Then for some $s \leq t$, the average number of paths from vertex $i$ to $\tau$ with length $s$ is at least $(3/4t)2^s$, where the average is taken over all vertices $i$ between $0$ and $n-1$.
\end{proposition}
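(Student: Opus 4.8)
The plan is to convert the analytic content of the mixing time into the combinatorial quantity $N_s(i) := (\mathbf{A}^s)_{\tau i}$, the number of length-$s$ walks from $i$ to $\tau$, and then to isolate one good length $s$ by averaging. First I would rewrite the mixing-time hypothesis as a lower bound on the average probability of having arrived at $\tau$. Because $\mathbf{W}$ is column-stochastic (each column sums to $1$) and the unique stationary distribution is $\mathbf{1}_\tau$, each column satisfies $\|\mathbf{W}^t\mathbf{e}_i - \mathbf{1}_\tau\|_1 = \bigl(1 - (\mathbf{W}^t)_{\tau i}\bigr) + \sum_{j\ne\tau}(\mathbf{W}^t)_{ji} = 2\bigl(1 - (\mathbf{W}^t)_{\tau i}\bigr)$, since the mass missing from $\tau$ is counted once at $\tau$ and once across the remaining vertices. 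Summing over $i$ and invoking $\frac{1}{n}\|\mathbf{W}^t - \boldsymbol{\Pi}\|_1 < 1/4$ at the mixing time $t$ then yields $\frac{1}{n}\sum_i (\mathbf{W}^t)_{\tau i} > 7/8$.

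Next I would bound the hitting probability $(\mathbf{W}^t)_{\tau i}$ from above by a weighted count of short paths. Every length-$t$ walk from $i$ to $\tau$ is monotone and, once it reaches $\tau$, can only follow the self-loop; so it is determined by a walk of some length $s \le t$ that reaches $\tau$ for the \emph{first} time at step $s$, followed by $t-s$ self-loops at $\tau$. The self-loops contribute probability $1$, while each of the first $s$ steps leaves a non-$\tau$ vertex of out-degree at least $2$, contributing a factor at most $1/2$. Hence each such walk has probability at most $2^{-s}$, and grouping by the first-hitting time $s$ gives $(\mathbf{W}^t)_{\tau i} \le \sum_{s=0}^t N_s(i)\,2^{-s}$, using that first-hitting walks of length $s$ are a subset of all length-$s$ walks to $\tau$ and so number at most $N_s(i)$.

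Finally I would average over $i$ and apply pigeonhole. Writing $\bar{N}_s = \frac{1}{n}\sum_i N_s(i)$, combining the two bounds gives $\sum_{s=0}^t 2^{-s}\bar{N}_s > 7/8$. The $s=0$ term equals $\bar{N}_0 = 1/n$ (only $i=\tau$ contributes the empty walk), so for $n$ not too small it is absorbed to leave $\sum_{s=1}^t 2^{-s}\bar{N}_s > 3/4$; as this is a sum of $t$ nonnegative terms, at least one index $s$ must satisfy $2^{-s}\bar{N}_s > 3/(4t)$, i.e. $\bar{N}_s > (3/4t)2^{s}$, which is exactly the claim.

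The main obstacle is the middle step: correctly passing from the random-walk probability to the integer path count. The delicate points are that the self-loop at $\tau$ must contribute trivially (otherwise the factor $2^{-s}$ would be wrong), that the out-degree-at-least-$2$ hypothesis is precisely what converts a probability into a $2^{-s}$ bound, and that only the prefix up to the first arrival at $\tau$ should be charged --- which is why the first-hitting-time decomposition, rather than a naive bound over all length-$t$ walks, is the right bookkeeping. The $L^1$ computation, the averaging, and the handling of the $s=0$ term and the mild lower bound on $n$ are routine by comparison.
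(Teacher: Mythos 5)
Your proposal is correct and follows essentially the same route as the paper's proof: bound the hitting probability $(\mathbf{W}^t)_{\tau i}$ by $\sum_{s\le t} p(i,s)2^{-s}$ via the first-hitting-time decomposition and the out-degree-at-least-$2$ hypothesis, lower-bound the average of $(\mathbf{W}^t)_{\tau i}$ from the mixing-time definition, and pigeonhole over $s$. Your treatment is in fact slightly more careful than the paper's (the factor-of-two $L^1$ identity giving $7/8$, and the explicit absorption of the $s=0$ term so that the pigeonhole runs over exactly $t$ indices), but these are refinements of the same argument rather than a different one.
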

One way to interpret this is for the mixing time of $\mathcal{G}$ to be `small', then there must be exponentially many paths from a typical vertex to $\tau$ with length less than the mixing time. We prove this statement in Appendix \ref{app:mixpaths}. As having many paths seems useful for efficient data propagation, mixing time seems a good measure of how efficient our network will be under this computational graph. 

Typically, we will treat mixing time as a \emph{cutoff}, determining which graphs would require too many layers---much like the \emph{under-reaching problem} \citep{Barcelo2020The}. It is also interesting to note that the fully connected graph does not have optimal mixing time---in fact, the additional paths create a \emph{distraction}, and the mixing time is optimised by the ``feedforward star'' graph with edges $\mathcal{E} = \{(i, \tau)\ |\ i\in\mathcal{V}\}$. This relates to an observation by \citet{giovanni2024how}, showing how fully-connected \emph{undirected} graphs do not have optimal commute time due to distracting additional edges. However, we will not favour such solutions due to a lack of self-similarity.

\subsection{Discussion About the Spectrum}
In the usual theory of random walks on strongly connected graphs, mixing time is related to $\lambda_1$. This is the maximal eigenvalue of the normalised adjacency matrix (other than the eigenvalue $1$).

However, in our case, there is \emph{not} an obvious interpretation of mixing time in terms of the spectrum of $\mathbf{W}$. This is because $\mathbf{ W}$ is lower-triangular. Hence, its spectrum is equal to its diagonal entries. Hence, as a set, the spectrum is just
$$\{ 1/\delta_{i\rightarrow}: 0 \leq i \leq n-1 \}.$$
In general, these matrices are not diagonalisable, and so the total multiplicity of these eigenvalues may be less than $n$.
For example, suppose that $\mathcal{G}$ has constant out-degree $\kappa$, apart from the terminal vertex. Then $\mathbf{W}$ just has two eigenvalues: $1$ and $1/\kappa$. 

\section{Minimax Fidelity: Information Sharpness}\label{sec:fide}

Mixing time provides an excellent estimate of how long will information need to travel in a graph---it is generally a good idea to keep it low. However, it is not the whole story: it is irrelevant if information from a given node travels fast to the sink if only a small proportion of it makes it through. To quantify the extent to which information sharply reaches the sink, we will be using the \emph{minimax fidelity} metric.

Additionally, we require that every node must have a positive in-degree; that is, $\delta_{i\leftarrow} > 0$ for all $i\in\mathcal{V}$. Note that this will be guaranteed by the self-edge property.

We want to track how ``pre-disposed'' this graph is to allowing information to travel freely in it. This relates to the over-squashing theorem in \citet{barbero2024transformersneedglassesinformation}, but unlike them, we do not assume \emph{any} degree of sharpness in choosing how information travels: specifically, at every step, we assume each node intakes the \emph{average} value of all of the nodes over its incoming edges.

\subsection{Fidelity}

Starting from our adjacency matrix as before, we now derive a ``diffusion'' process specified by the following matrix $\boldsymbol{\Delta}$:
\begin{equation}
    \Delta_{ij} = \begin{cases} 1/\delta_{i\leftarrow} & (j,i)\in\mathcal{E}\\
    0 & (j,i)\notin\mathcal{E}
    \end{cases}
\end{equation}
Note that this is a \emph{complementary} computation to the mixing time metric: while one normalises by \emph{row}, the other normalises by \emph{column}.
And, further, we are interested in how sharply represented can a particular node be in the diffused representations, especially ones that reach the sink vertex.

To simulate this, we start with an idealised setting where all the mass concentrates in a specific vertex, $i$. The vector representing its weight may be expressed as $\mathbf{e}_i\in\mathbb{R}^n$, which is a one-hot vector that is one in position $i$ and zero elsewhere.

One step of diffusion corresponds to a matrix multiplication by ${\boldsymbol{\Delta}}$:
\begin{equation}
(\boldsymbol{\Delta}\mathbf{x})_j = \sum_j{\Delta_{ij}x_j}
\end{equation}
(NB: this does not always yield a probability distribution!)

We can simultaneously estimate the diffusion properties for all $n$ possible initial vertices by stacking the $\mathbf{e}_i$ vectors, recovering the identity matrix, $\mathbf{I}$. Accordingly, after $t\geq 0$ layers of propagation, we can read off the coefficients of each item in each receiver by computing $\boldsymbol{\Delta}^{t}$.

Since we take particular care on how much information has reached \emph{the sink vertex}, we can define the \emph{\textbf{fidelity} of node $i$ at $t$ steps} as $\boldsymbol{\Delta}^{t}_{\tau i}$. This can be interpreted as: ``what is the coefficient of node $i$ in the weighted sum within $\tau$ after $t$ steps of averaging diffusion?''

An important point about the fidelity measure is that it does not always grow with increasing $t$---this is easy to observe in nodes near the end of the sequence:
\begin{proposition}\label{prop:sinkvul}
    Let $\mathcal{G}=(\mathbb{Z}_n,\mathcal{E})$ be a feedforward graph of $n > 1$ nodes with all self-edges and a unique sink vertex $\tau = n - 1$; that is, every node $i\in\mathbb{Z}_n$ is connected to $\tau$ by a path in $\mathcal{E}$. Then, if $\delta_{(n - 2)\leftarrow} > 1$, as $t\to\infty$, $\boldsymbol\Delta_{\tau, n-2}^{t}\rightarrow 0$, that is, the fidelity of node adjacent to the sink eventually vanishes if it has at least one nontrivial incoming edge.
\end{proposition}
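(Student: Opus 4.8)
The plan is to expand the matrix power $\boldsymbol\Delta^t_{\tau,n-2}$ as a sum over length-$t$ walks from node $n-2$ to $\tau$, and to exploit the fact that $n-2$ sits immediately before the sink to make this sum trivial to evaluate. The crucial structural observation is that, because the graph is feedforward, the only out-neighbours of $n-2$ are vertices $v$ with $v\geq n-2$; that is, $n-2$ itself (via the guaranteed self-edge) and possibly $\tau=n-1$. Moreover, since $n-2$ is assumed to be connected to $\tau$ by a path and the only way for a walk to leave $n-2$ is to step to $\tau$, the edge $(n-2,\tau)$ must in fact belong to $\mathcal E$. In particular $\tau$ has at least two incoming edges---its self-edge and the edge from $n-2$---so $\delta_{\tau\leftarrow}\geq 2$.

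First I would write $\boldsymbol\Delta^t_{\tau,n-2}=\sum \Delta_{\tau,k_{t-1}}\cdots\Delta_{k_1,n-2}$, where each nonzero factor $\Delta_{a,b}$ corresponds to an edge $b\to a$, so that the index sequence $n-2=k_0,k_1,\dots,k_t=\tau$ traces a forward walk. By the structural observation, every such walk remains within $\{n-2,\tau\}$: it stays at $n-2$ using self-loops, jumps once to $\tau$, and then stays at $\tau$. Writing $\beta:=\Delta_{n-2,n-2}=1/\delta_{(n-2)\leftarrow}$ and $\alpha:=\Delta_{\tau,\tau}=\Delta_{\tau,n-2}=1/\delta_{\tau\leftarrow}$, a walk that jumps at step $m$ contributes $\beta^{m-1}\,\alpha\,\alpha^{t-m}$, so that $\boldsymbol\Delta^t_{\tau,n-2}=\sum_{j=0}^{t-1}\alpha^{\,t-j}\beta^{\,j}$.

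Next I would read off the decay. The hypothesis $\delta_{(n-2)\leftarrow}>1$ gives $\beta<1$, and the bound $\delta_{\tau\leftarrow}\geq 2$ from connectivity gives $\alpha\leq 1/2<1$. Setting $\gamma:=\max(\alpha,\beta)<1$, each term satisfies $\alpha^{\,t-j}\beta^{\,j}\leq\gamma^t$, whence $0\leq\boldsymbol\Delta^t_{\tau,n-2}\leq t\,\gamma^t\to 0$ as $t\to\infty$, which is the claim. Alternatively one may sum the geometric series in closed form and let $t\to\infty$.

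The bulk of the argument is routine once the walk decomposition is in place; the one step deserving care---and the place where the proof genuinely uses the hypotheses---is the \emph{structural reduction}. I must verify that $n-2$ has no out-neighbour other than itself and $\tau$, which is a direct consequence of the feedforward constraint since $n-2$ is the penultimate vertex, and that connectivity to $\tau$ forces the edge $(n-2,\tau)$ and hence $\alpha<1$. Without the latter, if $\tau$ carried only its self-edge then $\alpha=1$ and the fidelity would not decay, so confirming $\delta_{\tau\leftarrow}\geq 2$ is the true crux. Everything else reduces to bounding a finite geometric sum.
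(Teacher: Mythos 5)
Your proof is correct and follows essentially the same route as the paper's: both reduce the problem to the two-state system on $\{n-2,\tau\}$, deduce $\delta_{\tau\leftarrow}\geq 2$ (and the presence of the edge $(n-2,\tau)$) from the unique-sink/connectivity hypothesis, and conclude via a bound of the form $t\gamma^{t}$ with $\gamma<1$. Your explicit walk sum $\sum_{j=0}^{t-1}\alpha^{t-j}\beta^{j}$ is exactly the closed-form solution of the recurrence $\boldsymbol\Delta^{t+1}_{\tau,n-2}=\alpha\beta^{t}+\alpha\boldsymbol\Delta^{t}_{\tau,n-2}$ that the paper instead bounds by induction, so the two arguments coincide in substance.
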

\begin{proof}
    Initially, $\boldsymbol\Delta_{n-2,n-2}^{0} = 1$, and $\boldsymbol\Delta_{j,n-2}^{0} = 0$ for all $j\neq n-2$. Since information can only travel forwards in the graph, we can conclude that, if $j < i$,  $\boldsymbol\Delta_{ji}^{t} = 0$ for all $t$. Hence, it will be sufficient to track $\boldsymbol\Delta_{n-2,n-2}^{t}$ and $\boldsymbol\Delta_{\tau,n-2}^{t}$ over time for the purpose of this proof.
    
    Following the formula, we conclude $\boldsymbol\Delta_{n-2,n-2}^{t+1} = \Delta_{n-2,n-2}\boldsymbol\Delta_{n-2,n-2}^{t}=(\delta_{(n-2)\leftarrow})^{-1}\boldsymbol\Delta_{n-2,n-2}^{t}=(\delta_{(n-2)\leftarrow})^{-t}$. Since we assumed $\delta_{(n-2)\leftarrow} > 1$, this value will certainly decay towards zero as $t\rightarrow\infty$.
    
    Since there is only one sink vertex, $\tau$, it must have an in-degree $\delta_{\tau\leftarrow} > 1$: it must have a self-edge (by assumption) and it must have a direct incoming connection from at least one other preceding node. Further, at least one of those nodes must be $n-2$, otherwise it would introduce another sink. The fidelity of node $n-2$ at $t+1$ layers can hence be expressed as $\boldsymbol\Delta_{\tau,n-2}^{t+1}=\Delta_{\tau,n-2}\boldsymbol\Delta_{n-2,n-2}^{t} + \Delta_{\tau\tau}\boldsymbol\Delta_{\tau,n-2}^{t} = (\delta_{\tau\leftarrow})^{-1}\left((\delta_{(n-2)\leftarrow})^{-t} + \boldsymbol\Delta_{\tau,n-2}^{t}\right)$.
    
    Now, note that this expression is maximal when $\delta_{\tau\leftarrow}=\delta_{(n-2)\leftarrow}=2$. As such, we can bound $\boldsymbol\Delta_{\tau,n-2}^{t+1}\leq 2^{-t-1} + \boldsymbol\Delta_{\tau,n-2}^{t} / 2$. From this, we can prove by induction that $\boldsymbol\Delta_{\tau, n-2}^{t}\leq t/ 2^{t}$. The base case ($t=0$) clearly holds, as $\boldsymbol\Delta_{\tau,n-2}^{0} = 0 \leq 0$. From there, assuming the bound holds for $t$, we can show it holds for $t+1$ by substituting $\boldsymbol\Delta_{\tau,n-2}^{t+1}\leq 2^{-t-1} + (t/2^{t})/2 = (t+1)/2^{t+1}$.
    
    This upper bound on $\boldsymbol\Delta_{\tau,n-2}^{t}$ also decays towards zero as $t\rightarrow\infty$, settling the proof.
\end{proof}

\subsection{(Mini)max Fidelity}

\begin{figure*}
    \centering
    \includegraphics[width=0.33\linewidth]{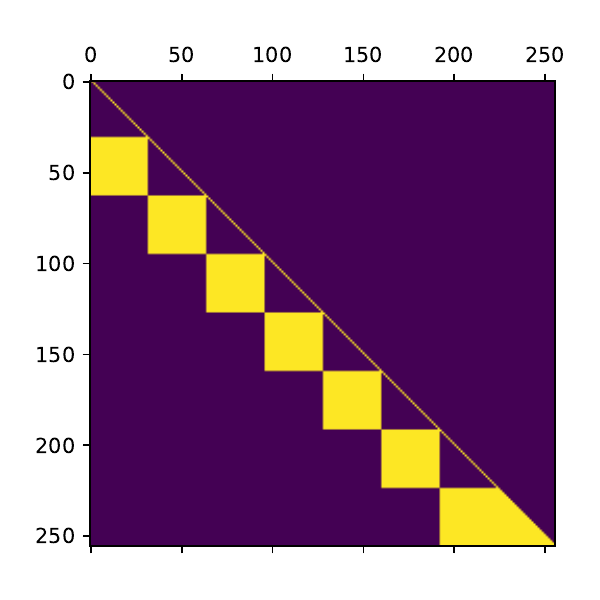}
    \hfill
    \includegraphics[width=0.33\linewidth]{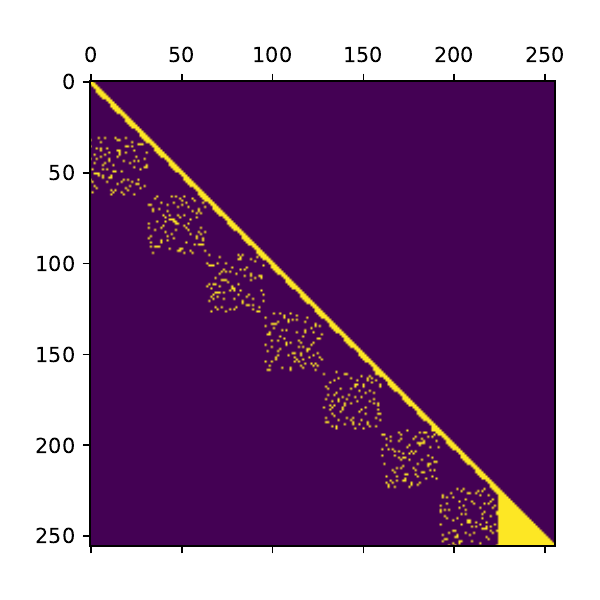}
    \hfill
    \includegraphics[width=0.33\linewidth]{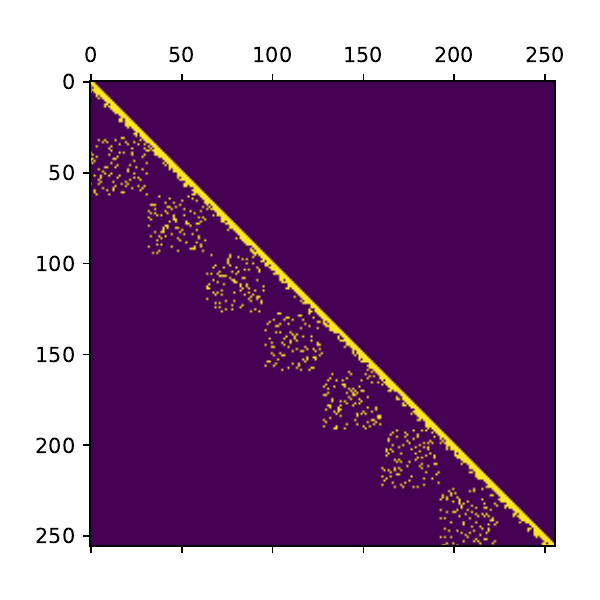}
    \caption{The evolution of the adjacency matrices generated by the FS graph generator. ({\bf Left:}) The original solution found by FunSearch, when asked to optimise minimax fidelity while keeping the mixing time under $2\log_2 n$. ({\bf Middle:}) Adapting the initial solution to bound the in-degree within each cluster. Each pair of successive blocks are now connected by a random bipartite expander. ({\bf Right:}) The final FS graph generator, leveraging a recursive construction to ensure self-similarity and sufficient sparsity.}\label{fig:FSG}
\end{figure*}

Given this result, it does not make sense to study long-term dynamics of fidelity for most nodes. Rather, we can track the ``best-case'' scenario for node $i$, as the \emph{maximal} fidelity it achieves over the lifetime of the diffusion; that is, $\phi_i = \max_{t} \boldsymbol\Delta^{t}_{\tau i}$. This is the most sharp we can ever extract node $i$'s features in the sink node\footnote{It is assumed implicitly that neural networks deeper than optimal may still be able to preserve this level of information of node $i$ by leveraging their residual connections \citep{he2016deep}.}.

Out of all of these sharpness measures, we are hunting for the ``weakest points'', where the model is particularly vulnerable to extracting a particular node's features, at any layer. As such, we compute the \textbf{minimax fidelity} as 
\begin{equation}
    \min_i\phi_i = \min_i\max_t\boldsymbol\Delta^{t}_{\tau i}
\end{equation}
We will now deploy this measure on some graph generators of interest, just as we did for mixing time.

It is worth noting that tracking how the signal evolves over time in this way is similar to the Dirichlet energy, which has been used extensively to study the over-smoothing effect in GNNs \citep{cai2020note,zhou2021dirichlet,rusch2022graph,di2022understanding}. One key difference is in the fact that the measure at hand here involves taking maxima and minima and hence is not as smooth to work with.

\subsubsection{Fully Connected Graphs (and Normalised Minimax Fidelity)}

While fully connected graphs had highly favourable logarithmic mixing time, they immediately fully average information, and therefore their minimax fidelity is always $1/n$. This is potentially problematic, especially if hit with dispersion issues at inference time \citep{velickovic2024softmaxforsharpoutofdistribution}.

Taking this into account, and given the ubiquitous use of the fully connected feedforward graphs in contemporary architectures, we will often opt to report the \emph{normalised minimax fidelity}:
\begin{equation}
    n\min_i\phi_i = n\min_i\max_t\boldsymbol\Delta^t_{\tau i}
\end{equation}
which ensures that the fidelity of the fully connected graph is always $1$ (see Figure \ref{fig:ffg}), and provides an intuitive threshold for whether the fidelity is more or less favourable than fully connected graphs.

\subsubsection{Line Graphs}

What line graphs sacrifice in mixing time (and hence tractability of certain problems) they make up for in fidelity. Indeed, line graphs are among the graph families with highest minimax fidelity. Their minimax fidelity can be expressed as follows\footnote{Diffusing a $1$ along the line graph basically involves generating normalised entries of a (trimmed) Pascal's triangle, which is why the binomial coefficients appear in this formula.}, for a graph of $n$ elements:
\begin{equation}
    \max_a \frac{\binom{a}{n - 1}}{2^a}
\end{equation}
This quantity also decays to zero as $n\rightarrow\infty$ (see Appendix \ref{appendix:minmax_line} for a proof using Stirling's approximation), but its normalised minimax fidelity consistently grows as $\sqrt{n / \pi}$ (see Figure \ref{fig:ffg}---indeed, line graphs have the highest initial fidelity of all considered graphs).

\subsubsection{Erd\H{o}s-R\'{e}nyi Graphs and Oriented Expanders}

So far, we have mainly examined line graphs and fully connected graphs -- which in ways correspond to two extremes with respect to the metrics we proposed. 

We may further see from Figure \ref{fig:ffg} that creating oriented versions of Erd\H{o}s-R\'{e}nyi \citep{erdos1960evolution} graphs (keeping indegrees fixed) as well as orienting graphs that are known to be undirected expanders (by assigning a random permutation to their nodes) results in negative performance on \emph{both} of our metrics compared to the fully connected graph. 

We will not study these graphs in more depth here, though it is worth remarking that fixed-indegree Erd\H{o}s-R\'{e}nyi graphs are very likely to introduce additional sinks, breaking one of our theory's key assumptions. Another reason why both of these distributions are likely to fail is that they are \emph{effectively unbiased} in terms of which incoming edges they sample -- an earlier node is equally likely to be assigned as an indegree neighbour as a latter one. As we've seen in Proposition \ref{prop:sinkvul}, latter nodes are particularly vulnerable to losing fidelity quickly. This motivates our next distribution under study.

\subsubsection{Poisson($p$) Graphs}

Guided by this observation, we set out to construct a graph generator which is \emph{biased} to create edges towards the end of the sequence (so it preserves fidelity better) while still having a chance of generating an edge which is further away from the target node (so it shortens mixing time).

This combination of constraints led us to consider a graph which samples edges by going right-to-left, simulating a Poisson process with probability $p$. It samples in-degree neighbours of node $i$ (given an indegree budget) as follows:
\begin{algorithmic}
\State $\mathcal{E}\gets\mathcal{E}\cup\{(i, i)\}$
\State $\mathrm{edges}\gets 1$
\State $j \gets i - 1$
\While{$j \geq 0\wedge \mathrm{edges}< \mathrm{budget}$} 
    \State $\rho\sim U(0, 1)$
    \If {$\rho > p$}
        \State $\mathcal{E}\gets\mathcal{E}\cup\{(j, i)\}$
        \State $\mathrm{edges}\gets \mathrm{edges} + 1$
    \EndIf
    \State $j \gets j-1$
\EndWhile 
\end{algorithmic}
Different values of $p$ will yield graphs with different levels of locality ($p=0$ yields a local graph generator). In our experiments, we found that $p=0.2$ struck the best balance between improving fidelity and penalising mixing time, though the mixing time is still not nearly as controllable as some of the other graphs -- the profile of the metrics achieved by $\mathrm{Poisson}(0.2)$ is provided in Figure \ref{fig:ffg}.

\section{The FunSearch (FS) Graph Generator}\label{sec:fs}

Apparently, even after trying out many hand-crafted sparse graph generators, we have not been able to strike the right balance between mixing time and fidelity. So we turned our attention to \emph{evolutionary methods}: as there's no clear method of constructing graphs which optimise the fidelity while keeping the mixing time within reasonable bounds, we used the FunSearch \citep{FunSearch2023} algorithm to produce graphs with good values for both metrics. The most promising result generated by FunSearch may be found in Figure \ref{fig:FSG} (Left), and it turns out to be an impactful motif. 

The core idea of this graph is to divide the nodes into $O(\log n)$ groups, then applying full bipartite graphs across successive chunks. Since there aren't almost any intra-cluster edges, random walks almost certainly always move from one cluster to the next, mixing in logarithmic time. And since each chunk only attends over a smaller number of nodes $O(n / \log n)$, this improves fidelity.

There are two core issues with this graph left to fix from an asymptotics point of view. First, the indegree of each node is $O(n / \log n)$ as $n\rightarrow\infty$, which is undesirable. Second, the graph does not exhibit self-similarity: it will only work well if the predictions will take place in the final node block, which will be exposed to the ``final triangle'' of full edges.
\begin{figure*}
    \includegraphics[width=0.33\linewidth]{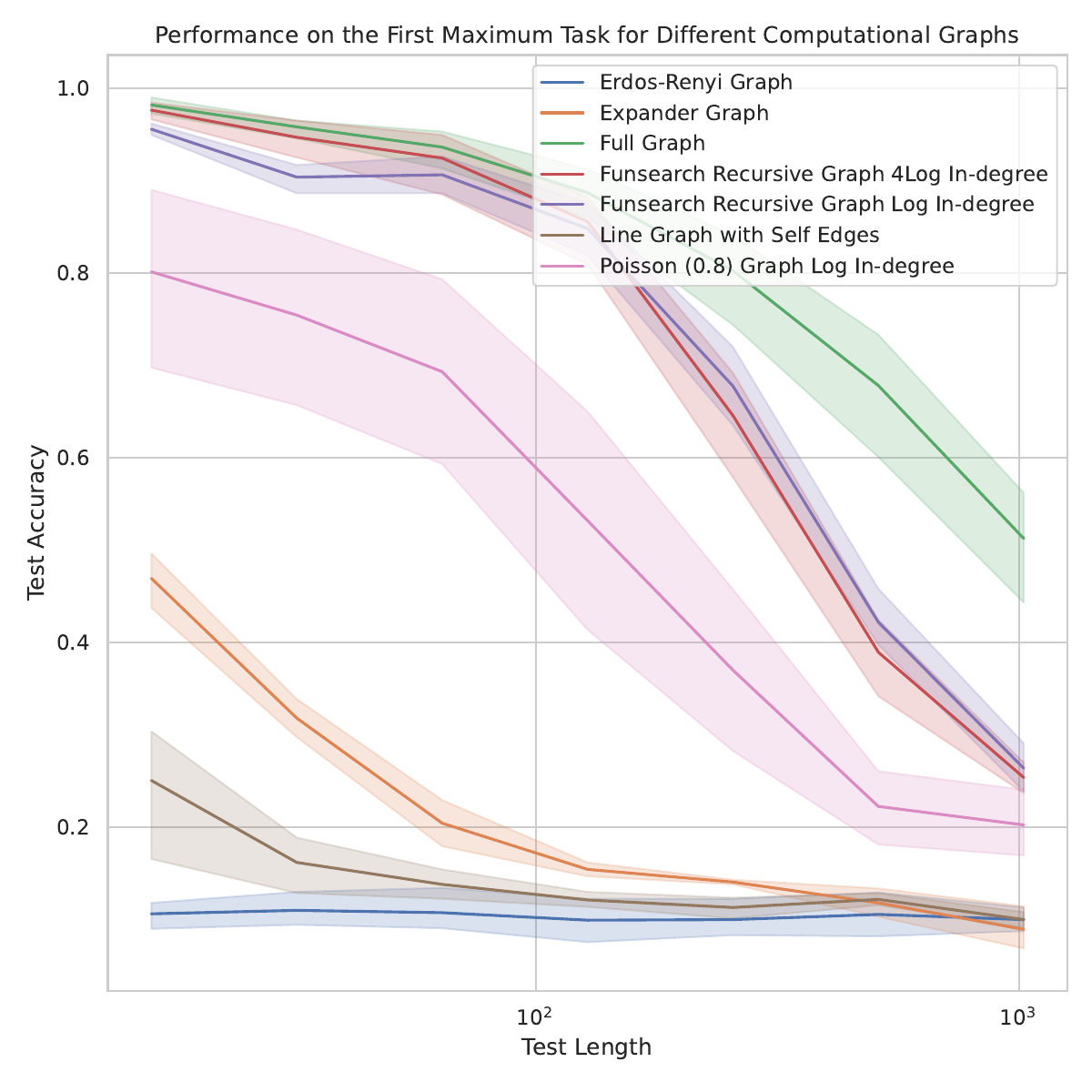}
    \includegraphics[width=0.33\linewidth]{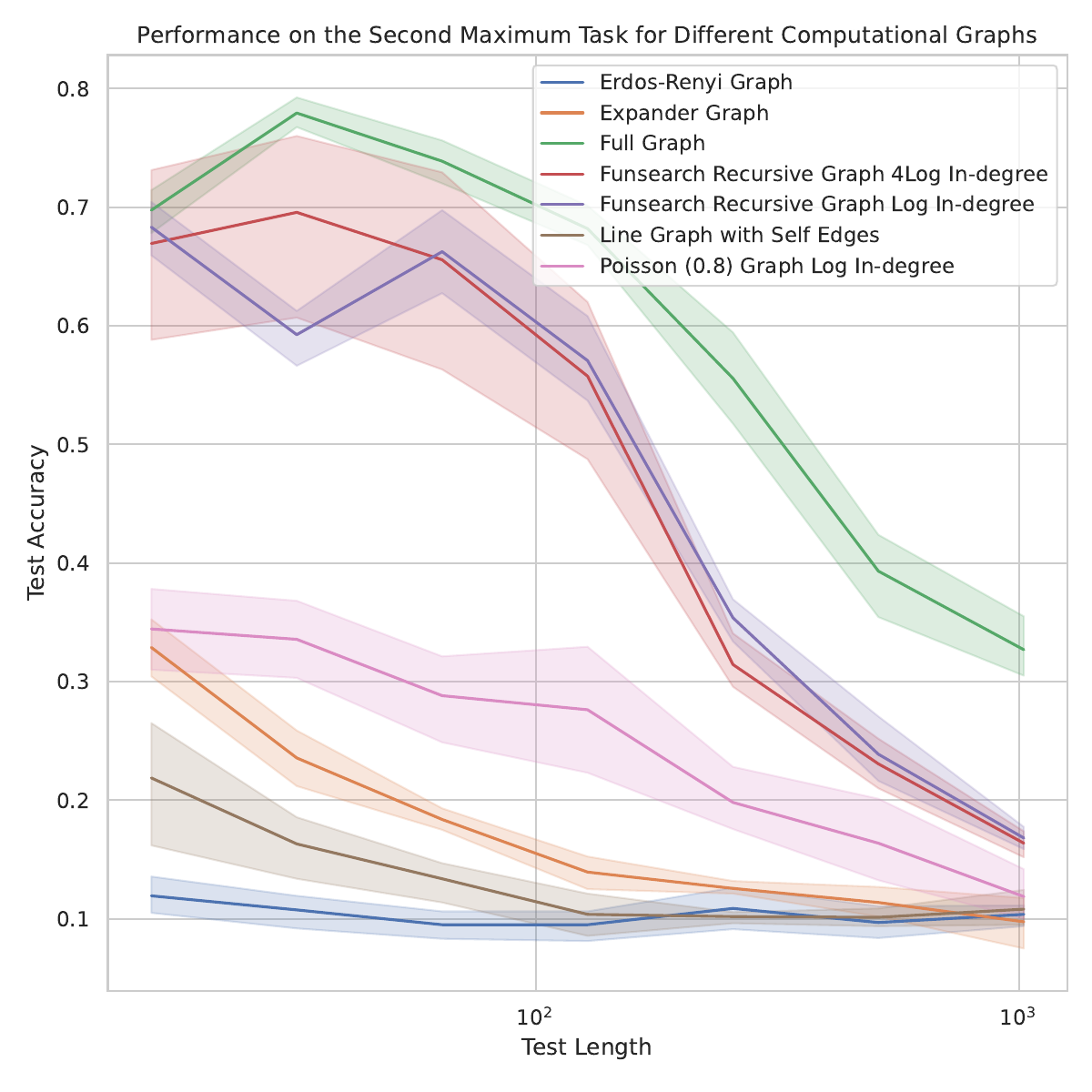}
    \includegraphics[width=0.33\linewidth]{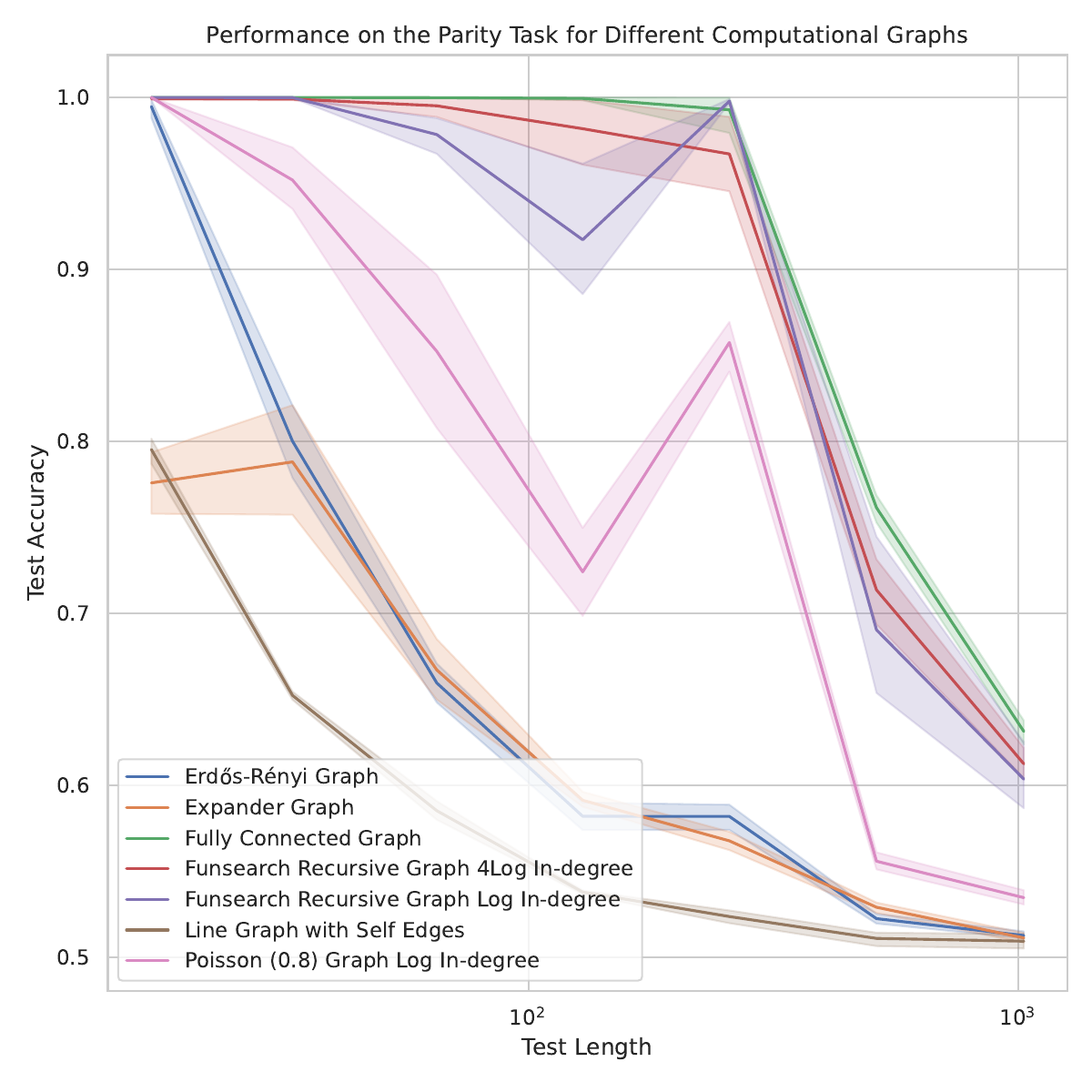}
    \caption{Performance profiles of GATs trained with various computational graphs, on the max retrieval task ({\bf left}), second max retrieval task ({\bf middle}), and parity task ({\bf right}), when trained up to size 256. The $x$-axis corresponds to test accuracy on sizes from $16$ to $1,024$.}
    \label{fig:empirical_results}
\end{figure*}

In order to fix these issues, we first note that each filled chunk of the matrix is essentially corresponding to edges of a bipartite graph. And constructions of bipartite expander graphs (which have excellent neighbourhood coverage) are well understood in mathematics. In Figure \ref{fig:FSG} (Middle), we provide one such construction, built by concatenating $\kappa$ random perfect matchings \citep{lubotzky1994discrete,sarnak1990some}.

\begin{figure}
    \centering
    \includegraphics[width=\linewidth]{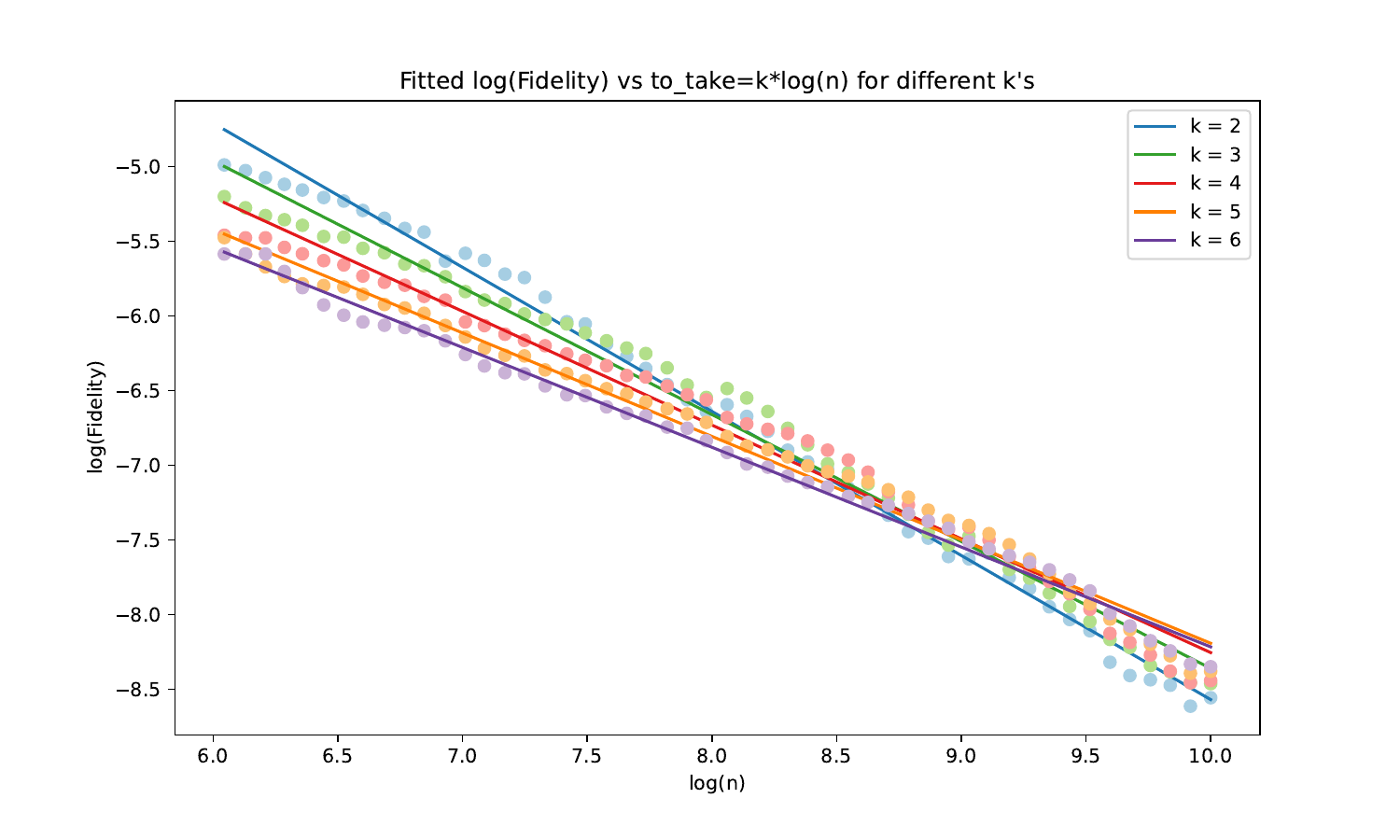}
    \caption{Fitted scaling laws of log-minimax fidelity for different indegrees (of the form $k\log n$ for different values of $k$ over $n$ nodes) of sampled expander graphs in the FS graph.}
    \label{fig:scaling}
\end{figure}

To fix self-similarity, it is evident that all intra-cluster edges (the ``blank triangles'' in the adjacency matrix) need to be filled equivalently. However, this would result in $O(n)$ asymptotic indegree per node. As such, we opt to fill the intra-cluster edges \emph{recursively}, by constructing smaller copies of the FS graph, within each triangle. By doing so, we recover the graph from Figure \ref{fig:FSG} (Right), which is now both suitably sparse and self-similar!

As can be seen in Figure \ref{fig:ffg}, the FS graph generator indeed has a desirable tradeoff between solid normalised minimax fidelity and contained mixing time. Further, the fidelity can be meaningfully controlled by varying the chosen indegree of each expander graph; for example, out of all choices of $O(\log n)$ indegree, we find that $4\log n$ offers a favourable ``fidelity scaling law'' (see Figure \ref{fig:scaling}).

\subsection{Analysing the Mixing Time of the FS Graph}

Having demonstrated that the FS graph generator has high fidelity and low mixing time in an empirical setting, we will now prove one of these properties---specifically, that its mixing time is favourable.

Informally, the idea of the proof is that the partitioning of the nodes into $O(\log n)$ blocks allows for quickly traversing long distances in the graph, via the bipartite edges. Making this construction recursive means that, once we reach the final block of size $O(n / \log n)$, the progression towards the sink will remain fast within that block, avoiding a linear mixing time as $n\rightarrow\infty$.

However, we also need to take care that the recursively introduced edges (which stay in the same block) do not overly impede the random walker's trajectory across blocks. Formally, we prove the following:

\begin{theorem}
Consider a FS graph generator with $\lceil\log n\rceil$ blocks per level for a graph of $n$ nodes. Further, assume that for every node, the proportion of its outgoing edges going across blocks is lower bounded by a constant $\alpha > 0$. Then the mixing time of graphs produced by this generator is $O(\polylog n)$. 
\end{theorem}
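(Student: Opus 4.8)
The plan is to reduce the claim about mixing time to a statement about how quickly a single random walker is absorbed at the sink $\tau$, and then to control this absorption time by a recursive ``block-crossing'' argument that mirrors the recursive construction of the FS graph. First I would record the reduction: since $\mathbf{1}_\tau$ is the unique stationary distribution (by the earlier Lemma) and $\tau$ is absorbing for $\mathbf{W}$ (its only out-edge is the self-loop, so $w_{\tau\tau}=1$), for any start vertex $i$ we have $\|\mathbf{W}^t\mathbf{e}_i - \mathbf{1}_\tau\|_1 = 2\bigl(1 - (\mathbf{W}^t)_{\tau i}\bigr) = 2\,\Pr[\text{walk from } i \text{ is not at } \tau \text{ after } t \text{ steps}]$. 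By convexity this also bounds $\|\mathbf{W}^t\mathbf{x}-\mathbf{1}_\tau\|_1$ for an arbitrary start distribution $\mathbf{x}$, so it suffices to exhibit $t = O(\polylog n)$ for which $\Pr[\text{walk from } i \text{ reaches } \tau \text{ within } t \text{ steps}] \geq 7/8$ uniformly over $i$; since the threshold $1/4$ in the definition of mixing time may be replaced by any constant below $1/2$, this certifies that the (worst-case, hence also averaged) mixing time is $O(\polylog n)$.

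Next I would set up the block-crossing estimate at a single level. Track the top-level block index $B_s \in \{0,\dots,\lceil\log n\rceil-1\}$ of the walker after $s$ steps. Because outgoing edges either stay inside the current block (intra-block recursive edges, including the self-loop) or cross to the next block, $B_s$ is nondecreasing and rises by exactly one each time a cross-block edge is used. The hypothesis that every node devotes a proportion at least $\alpha$ of its out-edges to crossing blocks means that, conditioned on the full history and on not yet being in the final block, each step increments $B_s$ with probability at least $\alpha$. Hence the number of steps needed to advance through all $\lceil\log n\rceil$ blocks is stochastically dominated by the waiting time for $\lceil\log n\rceil$ successes in independent $\mathrm{Bernoulli}(\alpha)$ trials, and a Chernoff bound shows that $O\bigl((\log n + \log(1/\eta))/\alpha\bigr)$ steps reach the final top-level block with probability at least $1-\eta$. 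Crucially, the intra-block moves (which may themselves traverse sub-blocks at finer recursion levels) are simply counted as non-advancing here and cause no harm, since every node -- including the local sink of a non-final block -- still has cross-block proportion at least $\alpha$.

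I would then iterate this across the recursion. Once the walker enters the final top-level block it can never leave it (all of its out-edges stay inside, as $\tau$ lives there and nothing follows it), and that block is itself an FS graph on $m \approx n/\lceil\log n\rceil$ nodes. Thus reaching $\tau$ decomposes into one phase per recursion level, where the level-$k$ phase crosses the $\lceil\log n\rceil$ blocks of a sub-FS-graph of size $n_k$, with $n_0 = n$ and $n_{k+1}\approx n_k/\lceil\log n\rceil$. The recursion bottoms out at constant size after $d = O(\log n/\log\log n)$ levels (crudely $d \leq \log_2 n$ suffices). Allocating a failure budget $\tfrac{1}{8d}$ to each phase, the level-$k$ phase costs $t_k = O\bigl((\log n + \log d)/\alpha\bigr) = O(\log n/\alpha)$ steps, so a union bound over the $d$ phases gives total failure probability at most $1/8$ within $t = \sum_k t_k = O\bigl(\log^2 n/(\alpha\log\log n)\bigr) = O(\polylog n)$ steps. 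Combined with the reduction, this yields $\|\mathbf{W}^t\mathbf{e}_i - \mathbf{1}_\tau\|_1 \leq 1/4$ for every $i$ at this $t$, so the mixing time is $O(\polylog n)$.

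I expect the main obstacle to be making the recursion rigorous rather than the individual estimates. Two points need care: (i) justifying that the per-step cross-block probability is bounded below by $\alpha$ at every level \emph{simultaneously}, i.e.\ that the hypothesis is genuinely a statement about the block structure the walker currently occupies, and that a walker inside a non-final block retains this lower bound no matter where the intra-block dynamics push it (including at that block's own local sink); and (ii) composing the phases honestly -- the walker enters each new level at an arbitrary vertex of the relevant sub-block, determined by wherever the expander edge from the previous block deposited it, so the phase bounds must be uniform over the entry point. This is precisely why I phrase everything as a worst-case-over-start-vertex hitting bound and propagate a union-bounded failure budget across levels. The clean coupling of the per-step advance to $\mathrm{Bernoulli}(\alpha)$ trials, independent of history, is what makes both points go through.
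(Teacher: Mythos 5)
Your proposal is correct and follows essentially the same route as the paper's proof: both reduce the problem to counting the $D\lceil\log n\rceil$ block-crossings the walker must make across the $D = O(\log n/\log\log n)$ recursion levels, lower-bound the per-step crossing probability by $\alpha$, and apply a binomial tail bound (your per-level Chernoff plus union bound versus the paper's single Hoeffding bound on the full chain of $D\lceil\log n\rceil$ states) to obtain the $O(\log^2 n/\log\log n)$ mixing time. Your explicit reduction from the $L^1$ mixing-time definition to the absorption probability at $\tau$ is a nice touch that the paper leaves implicit.
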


For reasons of brevity, we provide a full proof of this Theorem (along with supplementary remarks concerning how to construct the graph to match the assumptions of the Theorem) in Appendix \ref{app:rmk}.

\section{Results}



To supplement our findings with empirical performance metrics, we evaluate a graph attention network \citep{velickovic2018graph}-style model on three representative tasks:
\begin{itemize}
    \item finding the highest valued node in a sequence \citep{velickovic2024softmaxforsharpoutofdistribution};
    \item finding the second highest valued node in a sequence \citep{ong2022learnable};
    \item computing the parity value of a bitstring \citep{hahn2020theoretical}.
\end{itemize}
using adjacency matrices sampled from our graph generators. The tasks are chosen to represent opposite extremes of sharpness required: for finding the maximum, exactly one node needs to propagate its input features to the sink -- for parity, \emph{all} of them need to. The performances of different computational graphs can be seen in Figure \ref{fig:empirical_results}.

`(Second) Maximum retrieval' tasks consist in finding the category of the (second) maximum element in the set of items. Categories are provided one-hot encoded, together with the values. For each set the categories are consistent, but are randomised between different sets. Ten categories are used, so chance accuracy is $\sim10\%$.

The `Parity' task consists of computing the parity of the number of $1$s in a bitstring. Since there are only two possible parities, odd and even, random guessing should lead to around $50\%$ accuracy on the task.

For details on the models' architectures and optimisation, please see Appendix \ref{app:models_training}. As one can see in Figure \ref{fig:empirical_results}, the FS graph leads in general to the best performance in distribution and best out of distribution generalisation when compared to other sparse graphs---matching the intuition found by our metrics. More strongly, in the Parity task the FS graph manages to effectively match the fully connected graph, while having a substantially lower number of edges, and while its performance tends to be similar to the Poisson($0.8$) graph inthe Maximum task, in the Second Maximum and Parity tasks there's a significant gap in performance between the FunSearch graph and all the other sparse graphs.

Additionally, we may note that the results are aligned with the recommendations given in Figure \ref{fig:scaling}, which concerns the problem sizes up to $1,024$ studied here. Namely, leveraging an in-degree of $4\log n$ for the random expanders used to sparsify the FS graph has a more graceful performance profile with increasing problem size across all three tasks compared to using an in-degree of $\log n$.

\paragraph{On the Importance of Self-Edges and In-Degrees} Our theoretical framework always assumes that all \emph{self-edges} are included, as a valuable way to preserve each node's information over time. Accordingly, removing it led to significant performance regressions on all tasks -- especially on the Parity task where multiple elements need to interact meaningfully for the final answer computation. 

We also evaluated our graph generators at different orders-of-magnitude of in-degree -- from $O(1)$ to $O(\log n)$ to $O(\sqrt{n})$. We find that, as the in-degrees are increased, the models get more performant in-distribution, without a significant effect on overall out-of-distribution performance.

\section{Conclusions}

In this work, we have embarked on a detailed study of feedforward computational graphs, attempting to chart a novel path that could enable for a principled discovery of useful feedforward blueprints. While it is apparent that analysing these graphs is substantially less straightforward than doing so with their undirected counterparts, we believe the outcomes to have been fruitful. Namely, we proposed two well-justified metrics of feedforward information propagation, used them to automatically discover an interesting graph generator, and demonstrated its strong performance on several carefully crafted benchmarks. 

We are hopeful that our work will inspire targeted follow-up studies in this space, which may usher in a new paradigm of graph rewiring. To this end, we particularly believe it would be worthwhile to study computational graphs that \emph{dynamically change} across layers -- as opposed to a single fixed graph studied here -- as well as attempting to establish a more formal framework connecting mixing time, fidelity and similar metrics. The framework of \citet{lim2024graph} could be useful for reasoning about feedforward computational graphs spanning multiple layers.

\section*{Acknowledgements}

We would like to extend deep thanks to Steven Kapturowski for detailed assistance with the parity experiment, as well as Alexander Novikov for kindly supporting us as we were setting up our FunSearch experiments. We are also grateful to Razvan Pascanu and Simon Osindero for diligently reviewing this paper and providing valuable comments. Marc Lackenby was partially supported by the Engineering and Physical Sciences Research Council (grant number EP/Y004256/1).



\section*{Impact Statement}

This paper presents a novel framework for improving the information propagation quality across data with a feedforward constraint. Such models are highly prevalent, and innovations in this space could eventually lead to improvements in model sparsity (allowing for serving more powerful models on edge devices, for example) and model out-of-distribution generalisation (alowing more capable reasoning and scientific advances). Any societal consequences of our work could hence be equated to the societal consequences of accelerating these flavours of research.

\bibliography{example_paper}
\bibliographystyle{icml2025}

\newpage
\appendix
\onecolumn

\section{Proof of Proposition 4.2.}\label{app:mixpaths}

\begin{proof}
For $0 \leq i < n-1$, let $p(i,s)$ be the number of paths from vertex $i$ to $\tau$ with length $s$. Then
$$(\mathbf{W}^t)_{\tau i} \leq \sum_{s \leq t} \frac{p(i,s)}{2^{s}}.$$
This is because $(\mathbf{W}^t)_{\tau i}$ represents the probability of a random walk starting at $i$ and reaching $\tau$ by time $t$. This is equal to the sum, over all $s \leq t$, of the probability that the random walk first reaches $\tau$ at time $s$. For any such $s$, this probability is equal to the sum, over all paths from $i$ to $\tau$ with length $s$ avoiding the loop based at $\tau$, of the probability of taking that path. This latter probability is at most $1/ 2^{s}$ because at each step along the path there were at least two options that could have been taken.

For $t$ equal to the averaged mixing time,
$$\frac{1}{n} \sum_{i=0}^{n-1} (\mathbf{W}^t)_{\tau i} \geq \frac{3}{4}.$$
So,
$$\sum_{s \leq t} \frac{1}{n} \sum_{i=0}^{n-1} \frac{p(i,s)}{2^s} = 
\frac{1}{n} \sum_{i=0}^{n-1} \sum_{s \leq t} \frac{p(i,s)}{2^s} \geq \frac{1}{n} \sum_{i=0}^{n-1} (\mathbf{W}^t)_{\tau i} 
\geq \frac{3}{4}.$$
Therefore, for some $s \leq t$,
$$\frac{1}{n} \sum_{i=0}^{n-1} \frac{p(i,s)}{2^s}\geq \frac{3}{4t}.$$
So, the average number of paths from vertex $i$ to $\tau$ with length $s$ is at least $(3/4t)2^s$.
\end{proof}

\section{Proof that Minimax Fidelity in a Line Graph Decays to Zero with Increasing Size}
\label{appendix:minmax_line}
First let's determine an expression for $\max_a \frac{\binom{a}{n - 1}}{2^a}$ for a fixed $n$.

Notice that this is equivalent to finding an expression for $\max_a \frac{\binom{a}{k}}{2^a}$ and then replacing $k = n - 1$.

With a fixed $n$ we know that $a \geq k$, otherwise the binomial term would simply be $0$. Now let $f(a) = \frac{\binom{a}{k}}{2^a}$, and let's compute $\frac{f(a+1)}{f(a)}$:

\begin{equation*}
    \begin{aligned}
        \frac{f(a+1)}{f(a)} & = \frac{2^a}{2^{a+1}} \frac{\binom{a+1}{k}}{\binom{a}{k}} \\
        \frac{f(a+1)}{f(a)} & = \frac{1}{2}\frac{(a+1)!}{k!(a-k+1)!}\frac{k!(a-k)!}{a!} \\
        & = \frac{1}{2} \frac{(a+1)}{a-k+1}
    \end{aligned}
\end{equation*}
From that we know that $\frac{f(a+1)}{f(a)} \geq 1 \iff (a+1) \geq 2(a-k+1) \iff 2k-1 \geq a$. As such $f(a)$ increases for a in $[k, 2k-1]$ and decreases for $a > 2k$. We can thus say that $$\max_a \frac{\binom{a}{k}}{2^a} = \frac{\binom{2k}{k}}{2^{2k}}$$

Now let's compute our limit using Stirling's Approximation:
\begin{equation*}
    \begin{aligned}
&\lim_{k \to \infty} \frac{\binom{2k}{k}}{2^{2k}}  = \lim_{k \to \infty} \frac{(2k)!}{k!^2} \cdot \frac{1}{2^{2k}} = \\
 &\lim_{k \to \infty} \frac{(2k)^{2k}}{e^{2k}}\sqrt{2\pi(2k)} \frac{e^{2k}}{k^{2k} \cdot (2\pi k)} \cdot \frac{1}{2^{2k}} = \\
 & \lim_{k \to \infty} 2^{2k} \sqrt{\frac{1}{\pi k}} 2^{-2k} = 0
\end{aligned}
\end{equation*}

\section{Proof of Theorem 6.1.}\label{app:rmk}

\begin{proof}
In order to help us reason about the mixing time, it will be very useful to compute it as a function of the starting level and block the random walker is in. That is, we will track $T(d, k)$ as the expected number of steps needed for a random walker to reach the sink node, assuming the walker is currently $d$ depth levels away from the deepest recursive level, and $k$ blocks away from reaching the final block in the current depth level.

We already know that $k$'s maximal value will be $\lceil \log n \rceil$ for each depth level, due to the generator's parameters. For the maximal value of $d$, i.e. the total number of depth levels, we note that after each level, the number of nodes being considered is further subdivided by $\lceil\log n\rceil$. This means that, after $l$ levels, the block size is $\frac{n}{\lceil\log n\rceil^l}$. No further subdivisions are possible once the block size reaches $1$.

Denoting the total number of levels by $D$, we have that 
$$n / (\lceil \log n \rceil)^{D+1} < 1, \qquad n / (\lceil \log n \rceil)^{D} \geq 1$$ 
and hence 
$$\log n < (D+1) \log \lceil \log n \rceil, \qquad \log n \geq D \log \lceil \log n \rceil.$$
So
$$D = \left \lfloor \frac{\log n}{\log \lceil \log n \rceil} \right \rfloor.$$

With this in mind, we aim to quantify $T\left( \left \lfloor \frac{\log n}{\log \lceil \log n \rceil} \right \rfloor, \lceil \log n \rceil \right)$. To do this, we will establish several upper bounds on $T(d, k)$, assuming pessimistic behaviour from the random walker.

Firstly, when the walker is in the final level ($d=0$), the number of nodes considered is $1$, hence there are are no blocks left to traverse, and therefore $T(0, k) = 0$ for all $k$.

Then, once the walker hits the final block in its current depth level ($k=0$), it automatically transitions into the next level, which is further subdivided into $\lceil \log n \rceil$ blocks. In the worst-case scenario, the walker will have landed in the very first block of the next depth level, and will need to traverse them all. Hence, $T(d, 0) \leq T(d - 1, \lceil \log n \rceil)$ for all $d > 0$.

Finally, in all other cases, we leverage the assumption on the outgoing edge ratio to remark that the walker will transition into the next block at its current depth level with probability at least $\alpha$. We make a pessimistic assumption that, if the walker does not transition into the next block, it certainly stays exactly put within its current block. This leads to the upper bound of $T(d, k)\leq 1 + \alpha T(d, k - 1) + (1 - \alpha) T(d, k)$.

We now define $\tilde T(d,k)$ recursively using the following recurrence relations:
\begin{align}
    \tilde T(d, k) &= 1 + \alpha \tilde T(d, k - 1) + (1 - \alpha) \tilde T(d, k)\\
    \tilde T(d, 0) &= \tilde T(d - 1, \lceil \log n \rceil)\\
    \tilde T(0, k) &= 0
\end{align}
A simple induction gives that $T(d,k) \leq \tilde T(d,k)$. So it suffices to find an upper bound for $\tilde T(d,k)$.

The key observation is, since $\alpha$ is assumed identical in every depth level, and $k$ is always reset to $\lceil \log n \rceil$ after each depth level, we can equivalently represent the total walk time as a direct sum of walk times on each depth level. 

Then, since each depth level is represented as a chain of $\lceil \log n \rceil $ states, $\tilde T(d, k)$ may be observed as the expected time of traversing a line graph of $k + (d - 1)\lceil \log n \rceil$ steps and transition probability $\alpha$. The total number of such steps is $D \lceil \log n \rceil$, for $D$ being the number of depth levels.

To compute the mixing time of such a Markov chain, we assume that the walker starts in the first node of the first block, and we want to know the smallest value of $t$ such that
$$\sum_{j\leq D \lceil \log n \rceil} \binom{t}{j} \alpha^j (1-\alpha)^{t-j} < 1/4.$$
For ease of notation, set $N = D \lceil \log n \rceil$.
The left hand side is the cumulative distribution function for the binomial random variable $X(t,\alpha)$. So we must find the smallest value of $t$ such that $\mathbb{P}(X(t,\alpha) \leq N) < 1/4$. Now, Hoeffding's inequality gives that
$$\mathbb{P}(X(t,\alpha) \leq N) < \exp \left (-2t \left (\alpha - \frac{N}{t} \right)^2 \right ).$$
Hence, when $t = 2N /\alpha $, 
$$\mathbb{P}(X(t,\alpha) \leq N) < \exp \left (- N \alpha \right ).$$
When $\alpha$ is fixed and $N$ is large, this is certainly less than $1/4$. This establishes that the mixing time is at most $O(N) = O(D \log n)$. Note that when $t = D \lceil \log n \rceil$, then 
$$\sum_{j\leq D \lceil \log n \rceil} \binom{t}{j} \alpha^j (1-\alpha)^{t-j} =1.$$
So the mixing time is certainly at least $D\lceil \log n \rceil$.

Therefore, plugging in the appropriate values of $D$, we can conclude that the final mixing time will be $O\left(\frac{\log^2 n}{\log\log n}\right)$, which is $O(\mathrm{polylog} \, n)$, completing the proof.
\end{proof}

\begin{remark}
An important assumption for our mixing time derivation is that the outdegree ratio of each node's cross-block outgoing edges can be lower bounded by a constant (which we can use as the  value of $\alpha$ for our analysis).

Since there will be approximately $\frac{\log n}{\log\log n}$ depth levels in total, we cannot maintain a fixed outdegree of $\delta_{i\rightarrow}$ at each depth level, as the ratio is then $\frac{\delta_{i\rightarrow}}{\delta_{i\rightarrow}\frac{\log n}{\log\log n}} = \frac{\log\log n}{\log n}$, which will decay to zero as $n\rightarrow\infty$ and cannot be lower-bounded by a constant. This implies that the relative out-degree of each node in each block must decay with increasing depth level.

One simple way to support such a decay is to assume a \emph{geometric} decay with ratio $r < 1$; that is, if the amount of cross-block edges from a given node, $i$, at a given depth level, $d$, is $\delta_{i\rightarrow}^{d}$, the amount of cross-block edges from that node at the next depth level would be $\delta_{i\rightarrow}^{d-1} = \delta_{i\rightarrow}^{d} r$.

This process upper-bounds the total number of outgoing edges of every node in the final graph by a geometric series $\delta^d_{i\rightarrow} + \delta^d_{i\rightarrow} r + \delta^d_{i\rightarrow} r^2 + \dots = \frac{\delta^d_{i\rightarrow}}{1 - r}$. As such, the proportion of the edges which are cross-block at the current level would correspond to $\frac{\delta^d_{i\rightarrow}}{\frac{\delta^d_{i\rightarrow}}{1-r}} = 1 - r$. When moving to the next depth level, the number of outgoing edges will be $\delta^d_{i\rightarrow} r$, and since the previous level has already been fully crossed, there will be no edges from the previous level, and the total outdegree is $\frac{\delta^d_{i\rightarrow} r}{1 - r}$, also leading to a $1-r$ ratio. This trend continues with increasing depth level, and we can hence use $1-r$ as our pessimistic estimate of $\alpha$.

\end{remark}

\section{Adjacency Matrix Gallery}\label{app:gallery}
In Figures \ref{fig:attn_matr}--\ref{fig:attn_matr_3} we provide visualisations of several adjacency matrix samples for a variety of graph generators we studied in this paper.
\begin{figure*}[!htbp]
    \includegraphics[width=0.33\linewidth]{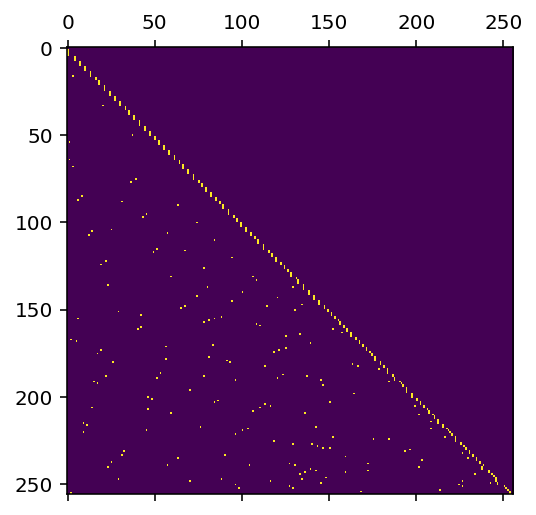}
    \includegraphics[width=0.33\linewidth]{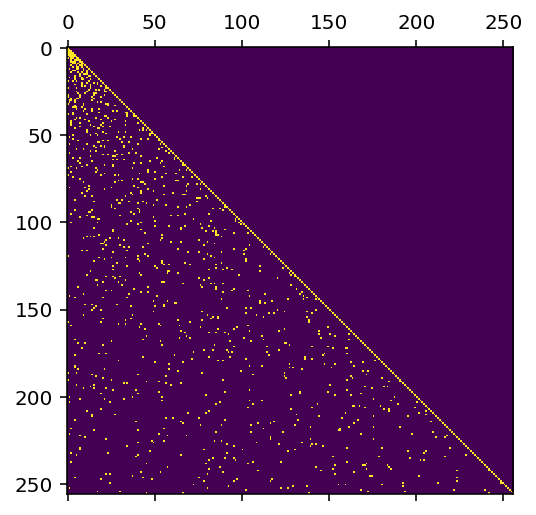}
    \includegraphics[width=0.33\linewidth]{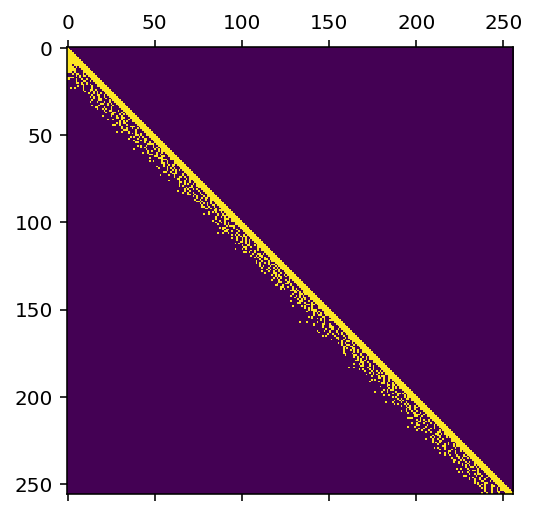}
    \caption{Generated samples of adjacency matrices for the graph generators of Oriented Expander ({\bf left}), Erd\H{o}s-R\'{e}nyi with constant indegree ({\bf middle}) and Poisson($0.2$) with constant indegree ({\bf right}).}
    \label{fig:attn_matr}
\end{figure*}
\begin{figure*}
    \includegraphics[width=0.33\linewidth]{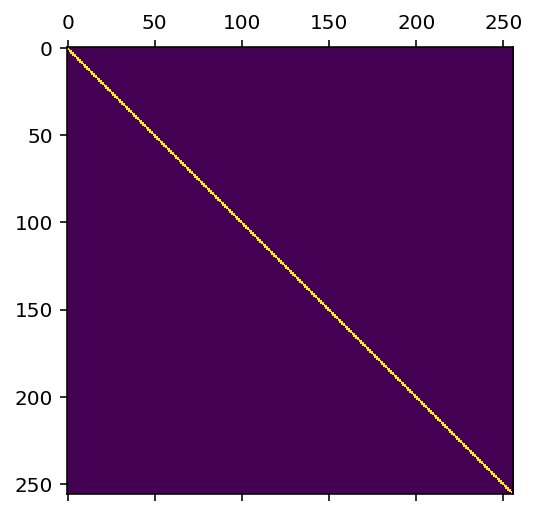}
    \includegraphics[width=0.33\linewidth]{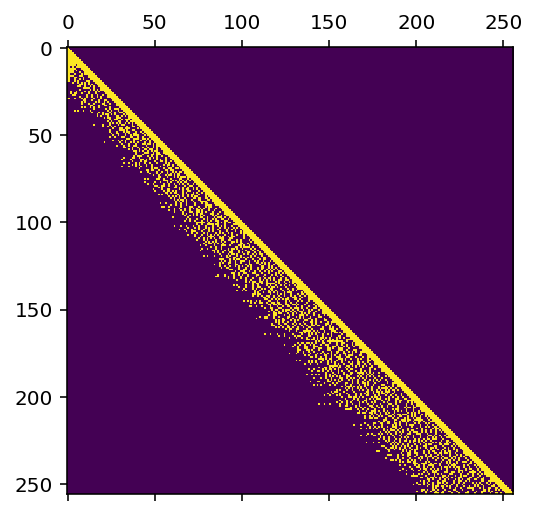}
    \includegraphics[width=0.33\linewidth]{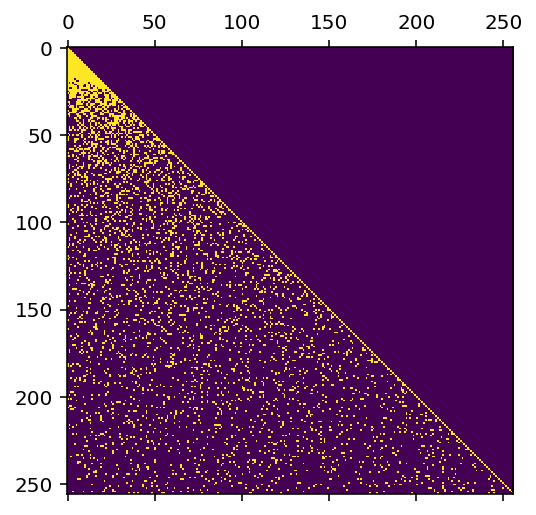}
     \caption{Generated samples of adjacency matrices for the graph generators of Line Graphs ({\bf left}), Poisson($0.2$) with $\sqrt{n}$ indegree ({\bf middle}) and Erd\H{o}s-R\'{e}nyi with $\sqrt{n}$ indegree ({\bf right}).}
    \label{fig:attn_matr_2}
\end{figure*}
\begin{figure*}
    \includegraphics[width=0.33\linewidth]{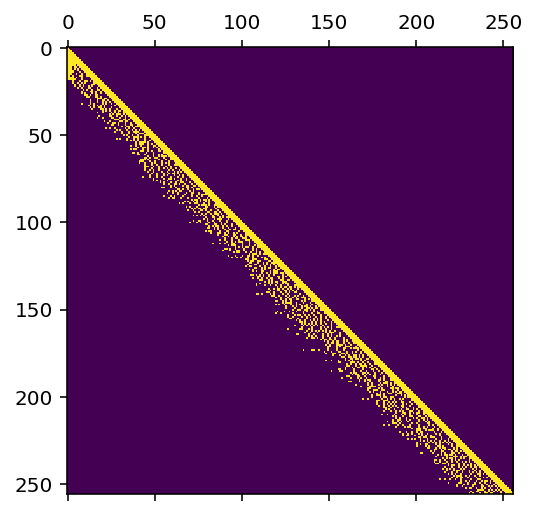}
    \includegraphics[width=0.33\linewidth]{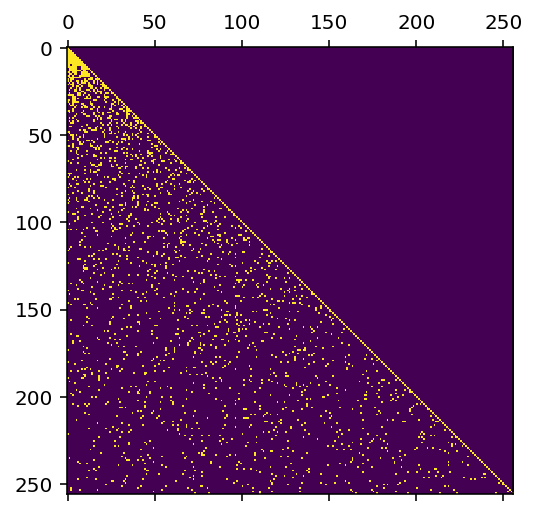}
    \includegraphics[width=0.33\linewidth]{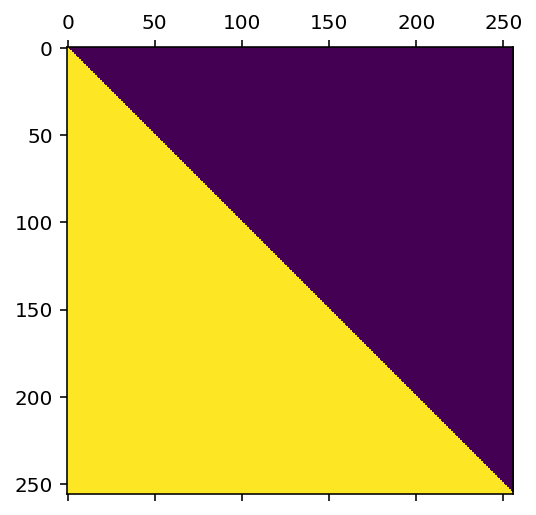}
    \caption{Generated samples of adjacency matrices for the graph generators of Poisson($0.2$) with $\log_2 n$ indegree ({\bf left}), Erd\H{o}s-R\'{e}nyi with $\log_2 n$ indegree ({\bf middle}) and the Fully Connected graph ({\bf right}).}
    \label{fig:attn_matr_3}
\end{figure*}

\clearpage

\section{Models and Training} 
\label{app:models_training}
We test graphs on two different tasks: bit set parity, which is well known to be hard for transformer architectures without Chain Of Thought \citep{kim2024transformersprovablysolveparity} and finding the category of the maximum or second maximum element in the set. We are interested in length generalisation properties of the tasks. So while we train on length up to $256$ elements, we test sequences of up to $1,024$ elements.  We tuned the order-of-magnitude of the learning rate and the weight decay coefficient using the fully-connected graph experiments only, and then reused the tuned parameters everywhere else.

\subsection{Maximum Retrieval}
For 'maximum' tasks we utilise simple transformer architecture. It consists of one attention block, which is a single attention layer  followed by two feedforward layers with GELU  activations and pre-norm in between. This attention block is repeated (with shared parameters) for $\log n$ steps for an input of $n$ nodes. We use the cross-entropy loss function and the AdamW optimiser with $10^{-3}$ learning rate and $256$ batch size. Training is performed over $10,000$ steps.  We utilise casual attention mask in conjunction with mask provided by graphs.
\subsection{Parity}
We use a model similar to Gemma 2 \citep{gemmateam2024gemma2improvingopen}, but with only only 8 layers and using standard multi-head attention with 8 heads, the vocabulary size was 2, as there are only 2 symbols in a bitstring, and the embedding dimension was $256$. The loss is a crossentropy and is computed over all positions in the sequence, so the model must predict the parity for all prefixes of the input bitstring. 
To train the model we used the LaProp optimizer \citep{ziyin2021lapropseparatingmomentumadaptivity} with weight decay and RMSClip \cite{shazeer2018adafactoradaptivelearningrates} for 1 million steps, with batch size of $128$ sequences. Our hyperparameters were:
\begin{table}[h]
\centering
\caption{LaProp Hyperparameters}
\begin{tabular}{cc}
\toprule
Hyperparameter & Value \\ \midrule
Learning Rate & $1\times10^{-3}$ \\
$\beta_1$ & 0.9 \\
$\beta_2$ & 0.9 \\
Weight Decay & $5\times10^{-4}$ \\
RMSClip's d & 1 \\
\bottomrule
\end{tabular}
\label{table:laprop_hparams}
\end{table}

As an indication of the utility of various feedforward graphs in the setting where nodes correspond to natural language tokens, we also fine-tuned Gemma 2B \citep{team2024gemma} -- utilising these graphs as attention masks across all Transformer layers -- on the standard Wikipedia dataset\footnote{\url{https://www.tensorflow.org/datasets/catalog/wikipedia}} containing texts obtained from Wikipedia database dumps.

After $3,000$ batches of training, we found that the perplexities obtained by the FS graph were comparable with the fully connected graph, while significantly improving on the line graph---a trend which is consistent throughout training.

\end{document}